\documentclass[lettersize,journal]{IEEEtran}
\usepackage{amsmath,amsfonts,bm}
\usepackage{algorithm}
\usepackage{array}
\usepackage{subcaption}
\usepackage{textcomp}
\usepackage{stfloats}
\usepackage{url}
\usepackage{verbatim}
\usepackage{graphicx}
\usepackage{cite}
\usepackage[capitalise]{cleveref}
\crefname{assumption}{Assumption}{Assumptions}
\crefname{problem}{Problem}{Problems}
\usepackage{selinput}\SelectInputMappings{adieresis={ä},germandbls={ß}}
\usepackage{epsfig}
\usepackage{times}
\usepackage{float}
\usepackage{tikz,pgfplots}
\usepackage[english]{babel}
\usepackage{algpseudocode}
\usepackage{enumitem}
\usepackage{verbatim}
\usepackage{makecell}
\usepackage{multirow}
\usepackage{xcolor}
\usepackage{soul}
\usepackage[switch]{lineno} 
\usetikzlibrary{arrows,shapes,backgrounds,patterns,fadings,matrix,arrows,calc,
	intersections,decorations.markings,
	positioning,arrows.meta}
\usepgfplotslibrary{fillbetween}
\usepgfplotslibrary{statistics}
\pgfplotsset{width=5\columnwidth /5, compat = 1.13,
	height = 60\columnwidth /100, grid= major,
	legend cell align = left, ticklabel style = {font=\scriptsize},
	every axis label/.append style={font=\small},
	legend style = {font={\scriptsize}},title style={yshift=-7pt, font = \small} }

\newtheorem{assumption}{\bf{Assumption}}
\newtheorem{theorem}{\bf{Theorem}}

\newtheorem{lemma}{\bf{Lemma}}
\newtheorem{remark}{\bf{Remark}}
\newtheorem{definition}{\bf{Definition}}
\newtheorem{proposition}{\bf{Proposition}}

\crefname@preamble{assumption}{Assumption}{Assumptions}
\crefname@preamble{lemma}{Lemma}{Lemmas}
\crefname@preamble{property}{Property}{Properties}

\xdefinecolor{redZ}{RGB}{234,29,93}
\xdefinecolor{blueZ}{RGB}{19,106,213}
\xdefinecolor{yellowZ}{RGB}{251,138,46}

\xdefinecolor{blue_revise}{HTML}{0000FF}

\xdefinecolor{red}{HTML}{d62728}
\xdefinecolor{green}{HTML}{2ca02c}
\xdefinecolor{blue}{HTML}{1f77be}
\xdefinecolor{cyan}{HTML}{ff7f0e}
\xdefinecolor{magenta}{HTML}{9467db}
\xdefinecolor{yellow}{HTML}{8c564b}
\xdefinecolor{gray}{HTML}{7f7f7f}

\DeclareMathOperator*{\argmax}{arg\,max}

\newif\ifProof
\Prooftrue
\newif\ifarxiv
\arxivtrue
\newif\ifFastDraft
\FastDraftfalse

\makeatletter
\def\@IEEEaftertitletext{\vspace{-0pt}}
\makeatother

\begin{document}
%	\linenumbers
	\title{
		% Non-Reciprocal Cooperative Online Learning for Multi-Robot Consensus under Computational Delay with Distributed Gaussian Processes
		% Cooperative Learning for Multi-Robot Control under Time-varying Heterogeneous Computational Delay
		% Asynchronous Cooperative Online Learning for Multi-Robot Control under Heterogeneous Computational Delay
		% Adjoint-Driven Multi-Robot Control with Non-Reciprocal Gaussian Process Learning
		% Asynchronous Cooperative Online Learning for Multi-Robot Control with Gaussian Processes under Heterogeneous Computational Delays
		Asynchronous Cooperative Online Learning for Multi-Robot Control under Computational Delays
	}
	
	\author{
		Xiaobing Dai, Zewen Yang~\IEEEmembership{Member, IEEE}, Wei Ren,~\IEEEmembership{Fellow, IEEE},  Sandra Hirche,~\IEEEmembership{Fellow, IEEE}
		\thanks{
			This work was supported by the Federal Ministry of Education and Research of Germany in the programme of ``Souverän. Digital. Vernetzt.'' under joint project 6G-life with project identification number: 16KISK002. \textit{(Corresponding author: Zewen Yang.)}
		}
		\thanks{
			Xiaobing Dai and Sandra Hirche are with the Chair of Information-oriented Control, Technical University of Munich, 80333 Munich, Germany (email: xiaobing.dai, hirche@tum.de).
		}
		\thanks{
			Zewen Yang is with the Chair of Robotics and Systems Intelligence, Technical University of Munich, 80992 Munich, Germany (email: zewen.yang@tum.de).
		}
		\thanks{
			Wei Ren is with the Department of Electrical and Computer Engineering, University of California at Riverside, Riverside, CA 92521 USA (e-mail: ren@ece.ucr.edu).
		}
	}

	\maketitle
	
	\begin{abstract}
		Ensuring the safe operation of multi-agent systems (MASs) under uncertain environments is crucial for cooperative robotic, where external disturbances and inaccurate dynamic models can significantly compromise performance and reliability.
		To address this challenge, calibrated machine learning models, particularly Gaussian process (GP) regression, are extensively employed due to their interpretable performance quantification.
		As the interconnected communication of MASs facilitates cooperative learning, agents are able to enhance learning performance by exchanging local GP inferences with their neighbors and aggregating the received information via distributed GP strategies.
		However, variations in computational power and prediction tasks among agents inevitably lead to heterogeneous computational delays and differences in query points, which are often overlooked in existing aggregation methods.
		To overcome these limitations, this work proposes an asynchronous cooperative learning strategy that explicitly accounts for prediction accuracy, query point variations and delay effects. 
		Additionally, a distributed control law based on an adjoint MAS is developed to ensure the desired control performance. 
		Simulations on unmanned surface vehicles validate the effectiveness of the proposed approach, demonstrating substantial improvements in both learning and control performance compared to the state-of-the-art approaches.
	\end{abstract}
	\begin{IEEEkeywords}
		Cooperative learning, heterogeneous computational delay, Gaussian process regression, online learning
	\end{IEEEkeywords}
	
	\section{Introduction}
	\label{section_introduction}	
	
	\subsection{Background}
	
	Safety-critical multi-agent system (MAS) control has drawn large attention in robotics including applications in unmanned vehicles \cite{yang2025safe} and manipulators \cite{zhang2020modular}, where individual robots cooperate with each other to accomplish tasks such as consensus \cite{panagou2015distributed}, formation \cite{wang2017distributed}, coverage \cite{cortes2004coverage, zhu2013distributed} and optimization \cite{zhu2011distributed}. 
	However, the exact agent dynamics is often hard to obtain because of high model complexity and environmental uncertainties, imposing challenges for designing model-based controllers \cite{lederer2022cooperative}. 
	To this end, machine learning (ML) techniques are employed on each agent to predict unknown components in its dynamics based on collected data. 
	
	\subsection{Related Works}
	
	Notably, the dependency of high prediction accuracy and better control performance for MASs is demonstrated in many existing studies \cite{dai2024decentralized}.
	Consequently, cooperative learning is a promising methodology that leverages inter-agent collaboration to improve learning quality \cite{yang2024cooperative}.
	Specifically, cooperative neural network (NN)-based MAS control is extensively studied \cite{zhu2023neural}, assuming that the unknown system is approximated as a linear combination of predefined nonlinear features. 
	However, the accuracy of NN relies on the choice of nonlinear features \cite{zhu2023neural}.
	Practical quantification of prediction error remains an open challenge, where only overly conservative results exist \cite{lamperski2024approximation}, limiting its applications in safety-critical scenarios. 
	
	Gaussian process (GP) regression-based cooperative learning is another popular approach due to its modeling flexibility for arbitrary continuous functions \cite{williams2006gaussian} and interpretable error quantification \cite{srinivas2012information, hashimoto2022learning}.
	Cooperative GPs require agents sharing their predictions with neighbors and aggregating received neighbor predictions \cite{yang2024cooperative}.
	However, these aggregation techniques demand that agents compute predictions for the neighboring agents, leading to significant local computational burden.
	Moreover, some resource-efficient distributed GPs are investigated recently for multi-agent systems \cite{ding2024resource, yuan2020communication, yuan2024lightweight}. 
	Similarly, decentralized GP frameworks such as nested GPs \cite{kontoudis2021decentralized} enable cooperative learning across multiple agents with limited communication.
	While these approaches focus on computational efficiency, scalability and communication constraints, they typically assume that predictions are synchronously available or do not explicitly account for the impact of computational delays.
	Notably, when computational delays are considered in existing cooperative GP approaches, the predictions received from neighbors, which are derived from their local data, arrive in a delayed manner.
	However, only few works take the limitation on local computational power into account.
	Although cooperative learning enhances predictive performance through information sharing, its accuracy depends on the quality of each agent’s individual GP model.
	To ensure desired learning performance during operation, the effectiveness of online cooperative learning is shown in the previous works \cite{dai2024cooperative}, allowing individual GP models to be updated with newly collected data. 
	However, the high computational intensity of online learning induces non-negligible computational delays.
	
	The learning-based MAS control with delays has drawn huge attention, but existing research focuses only on the communication delays \cite{zhang2022adaptive}. 
	Unlike well-understood heterogeneous communication delays, which allow continuous data transmission with different time shifts, computational delay directly blocks additional operations during execution \cite{dai2023can}. 
	This difference prevents the straightforward extension of existing results to the setting with computational delays from machine learning. 
	Consequently, the problem of computational delays in cooperative online learning-based MAS control remains unexplored, and only few preliminary works \cite{yang2025asynchronous} explored the effect of delayed learning in a single-agent system. 
	Specifically, the computational delay induced by GP regression is first analyzed in \cite{dai2023can}, revealing the coupled effect of prediction accuracy and computational delay on control performance.
	However, the extension to MAS with distributed cooperative GP is nontrivial due to different computational power and prediction tasks across agents.
	Therefore, it is urgent to investigate a cooperative online learning-based distributed control strategy for MAS under heterogeneous computational delays. 
	
	\subsection{Contributions}
	
	In this paper, the main contributions are the answers to the following three questions, which are induced by heterogeneous computational delay in cooperative online learning-based multi-agent system control.
	\begin{enumerate}
		\item[\textbf{Q1}] \emph{Can the delayed prediction from neighboring agents improve individual learning performance?}
		\item[\textbf{Q2}] \emph{How to properly utilize the asynchronous predictions from neighbors with different query points?}
		\item[\textbf{Q3}] \emph{How to design a distributed controller leveraging asynchronous cooperative online learning?}
		\item[\textbf{Q4}] \emph{Does the proposed controller have a formal guarantee on control performance in MASs?}
	\end{enumerate}
	Here, we focus on a MAS modeled in the Euler-Lagrange form with unknown disturbances motivated by robotics and mechatronic systems, whose control objective is to achieve a second-order consensus. 
	GP regression is deployed on each agent to predict unknown components using a data set, which is online updated with newly collected measurements. 
	Moreover, each GP model only computes and shares the predictions of its own state with neighbors, avoiding the inference of unknown dynamics based on neighbors' states as in \cite{yang2024cooperative}.
	To aggregate received predictions, an asynchronous distributed cooperative learning strategy is proposed for MASs, which accounts for prediction accuracy, query point difference, and delay effects.
	Furthermore, a distributed control law is devised based on an introduced adjoint MAS and the proposed cooperative online learning strategy, guaranteeing the achievement of practical second-order consensus. 
	The effectiveness of the cooperative learning and distributed control strategy is demonstrated through simulations on unmanned surface vehicles, highlighting improvements in both learning and control performance. 
	
	\subsection{Graph Theory}
	
	The communication network among $N \in \mathbb{N}_+$ agents is modeled as an undirected graph $\mathcal{G} = \{ \mathcal{V}, \mathcal{E} \}$, with the vertex set $\mathcal{V} = \{ 1, \cdots, N \}$ and the edge set $\mathcal{E} \subset \mathcal{V} \times \mathcal{V}$.
	Specifically, $(i, j) \in \mathcal{E}$ indicates the existence of a communication channel between agent $i$ and $j$ for any $i, j \in \mathcal{V}$.
	The communication graph is characterized by adjacent matrix $\bm{A} \in \mathbb{R}^{N \times N}$, whose entries $a_{i,j}$ at the $i$-th row and $j$-th column are set as $a_{i,j} = a_{j,i} = 1$ if $(i, j) \in \mathcal{E}$, and $a_{i,j} = a_{j,i} = 0$ otherwise. 
	The neighbor set of agent $i$ is defined as $\mathcal{N}_i = \{ j \in \mathcal{V} | a_{i,j} \ne 0 \} \subset \mathcal{V}$.
	Moreover, the input degree matrix is defined as $\bm{D} = \mathrm{diag}(d_1, \cdots, d_N)$ with $d_i = \sum_{j = 1}^N a_{i,j}$ for any $i \in \mathcal{V}$, so that the Laplacian matrix is written as $\bm{L} = \bm{D} - \bm{A} \in \mathbb{R}^{N \times N}$.
	%%%%%%%%%%%%%%%%%%%%%%%%%%%%%%%%%%%%%%%%%%%%%%%%%%%%%%%%%%%%%%%%%%%%
	%%%%%%%%%%%%%%%%%%%%%%%%%%%%%%%%%%%%%%%%%%%%%%%%%%%%%%%%%%%%%%%%%%%%
	\section{Problem Setting}
	\label{section_problem}
	%%%%%%%%%%%%%%%%%%%%%%%%%%%%%%%%%%%%%%%%%%%%%%%%%%%%%%%%%%%%%%%%%%%%
	%%%%%%%%%%%%%%%%%%%%%%%%%%%%%%%%%%%%%%%%%%%%%%%%%%%%%%%%%%%%%%%%%%%%
	
	%%%%%%%%%%%%%%%%%%%%%%%%%%%%%%%%%%%%%%%%%%%%%%%%%%%%%%%%%%%%%%%%%%%%
	\subsection{Problem Formulation}\label{subsection_problem}
	%%%%%%%%%%%%%%%%%%%%%%%%%%%%%%%%%%%%%%%%%%%%%%%%%%%%%%%%%%%%%%%%%%%%
	In this paper, a homogeneous MAS with $N \in \mathbb{N}_+$ agents is considered, where the dynamics of each agent $i = 1, \cdots, N$ in Euler-Lagrange form is written as
	\begin{align}
		\label{eqn_agent_EL_dynamics}
		\bm{M}(\bm{q}_i(t)) \ddot{\bm{q}}_i(t) + \bm{C}([\bm{q}_i^T(t), & \dot{\bm{q}}_i^T(t)]^T) \dot{\bm{q}}_i(t) + \bm{g}(\bm{q}_i(t)) \\
		&= \bm{\tau}_i(t) + \bm{d}([\bm{q}_i^T(t), \dot{\bm{q}}_i^T(t)]^T), \nonumber
	\end{align}
	where $\bm{q}_i \in \mathbb{X}_{q,0} \subset \mathbb{R}^n$ and $\dot{\bm{q}}_i \in \mathbb{X}_{q,1} \subset \mathbb{R}^n$ denote the generalized coordinate and velocity of robot $i$ forming the agent state $\bm{x}_i = [\bm{q}_i^T, \dot{\bm{q}}_i^T]^T \in \mathbb{X} \subset \mathbb{R}^{2 n}$ with dimension $n \in \mathbb{N}_+$.
	The known mass matrix $\bm{M}(\cdot): \mathbb{X}_{q,0} \to \mathbb{R}^{n \times n}$, Coriolis force matrix $\bm{C}(\cdot): \mathbb{X} \to \mathbb{R}^{n \times n}$ and gravity force vector $\bm{g}(\cdot): \mathbb{X}_{q,0} \to \mathbb{R}^n$ are obtained from nominal robot dynamics.
	The control input of the agent dynamics \eqref{eqn_agent_EL_dynamics} denotes $\bm{\tau}_i \in \mathbb{R}^n$.
	The unknown continuous $\bm{d}(\cdot): \mathbb{X} \to \mathbb{R}^n$ encodes all uncertainties, which is assumed identical to all agents. 
	Moreover, the system \eqref{eqn_agent_EL_dynamics} satisfies the following assumption.
	
	\begin{assumption} \label{assumption_dynamics}
		There exists strictly positive constant $\underline{M} \in \mathbb{R}_+$ such that $\underline{\sigma}(\bm{M}(\bm{q})) \ge 1 / \underline{M}$ for any $\bm{q} \in \mathbb{X}_q$.
	\end{assumption}
	
	\cref{assumption_dynamics} only requires the positive definite of the mass matrix $\bm{M}(\bm{q})$ for any $\bm{q} \in \mathbb{X}_q$, which is satisfied in most engineering applications including underwater vehicles \cite{yang2025safe} and manipulators \cite{zhang2020modular}.
	\cref{assumption_dynamics} guarantees that the inertia matrix $\bm{M}(\bm{q}_i)$ remains uniformly positive definite for all admissible configurations, which ensures the uniform nonsingularity and well-posedness of \eqref{eqn_agent_EL_dynamics}.
	Therefore, this assumption imposes no practical restriction.
	
	The agents communicate through a network modeled as an undirected graph $\mathcal{G}$, which satisfies the following assumption.
	
	\begin{assumption} \label{assumption_graph}
		The undirected graph $\mathcal{G}$ is connected.
	\end{assumption}
	
	\cref{assumption_graph} is common in many MAS control literature, e.g., \cite{dai2024decentralized}, which allows the information of each agent to be transmitted to other agents within finite steps.
	
	The control task is to achieve second-order approximate consensus, which is mathematically formulated as follows.
	\begin{definition} \label{definition_consensus}
		A second-order multi-agent system with \eqref{eqn_agent_EL_dynamics} achieves approximate consensus, if there exit well-defined positive constants $\epsilon_0, \epsilon_1 \in \mathbb{R}_{0,+}$ such that
		\begin{align}
			\lim\nolimits_{t \to \infty} \| \bm{q}_i(t) - \bm{q}_j(t) \| \le \epsilon_0, && \lim\nolimits_{t \to \infty} \| \dot{\bm{q}}_i(t) \| \le \epsilon_1
		\end{align}
		hold for any $i, j \in \mathcal{V}$ and $i \ne j$.
	\end{definition}
	
	Classical second-order consensus formulations typically require asymptotic agreement of both positions and velocities, allowing the common velocity to be nonzero \cite{hou2017consensus}.
	In contrast, \cref{definition_consensus} additionally enforces convergence of all agent velocities $\dot{\bm{q}}_i$ to zero.
	This formulation avoids persistent collective drift in robotic coordination tasks \cite{lin2010consensus} and achieves stationary consensus behavior for the considered system \eqref{eqn_agent_EL_dynamics}.
	Moreover, the approximate consensus in \cref{definition_consensus} is widely studied in MAS \cite{amelina2015approximate, jiang2023reachable}, where agents are required to reach agreement within a bounded tolerance.
	In this work, the approximate consensus is considered due to the uncertain dynamics \eqref{eqn_agent_EL_dynamics} and practically non-vanishing estimation error for unknown $\bm{d}(\cdot)$.
	
	\begin{remark}
		The generalized coordinates $\bm{q}_i$ in \eqref{eqn_agent_EL_dynamics} are not restricted to physical Cartesian positions and may also represent abstract coordination variables, such as manipulator joint configurations or task-space synchronization states. 
		Even when $\bm{q}_i$ are the Cartesian positions, collision-free coordination can be interpreted through offset consensus, where consensus of $\bm{q}_i$ yields constant relative distances between agents.
		Consequently, physical collisions are avoided while preserving the same stability analysis framework.
	\end{remark}
	
	\subsection{Data-driven Distributed Control}
	To achieve the second-order approximate consensus in \cref{definition_consensus}, a distributed control law is designed as
	\begin{align} \label{eqn_control_law}
		\bm{\tau}_i(t) = \bm{\tau}_{s,i}(\cup_{j \in \{ i, \mathcal{N}_i \}} \bm{x}_j) + \bm{\tau}_{m,i}(\bm{x}_i) + \bm{\tau}_{c,i}(t) - \hat{\bm{d}}_i(t)
	\end{align}
	for $\forall i \in \mathcal{V}$, where the consensus law $\bm{\tau}_{s,i}(\cdot)$ and the model related feedforward term $\bm{\tau}_{m,i}(\cdot)$ are written as 
	\begin{align}
		&\bm{\tau}_{s,i}(\cup_{j \in \{ i, \mathcal{N}_i \}} \bm{x}_j ) = - \bm{M}(\bm{q}_i) (c_0 \bm{e}_{q,i} + c_1 \dot{\bm{e}}_{q,i} ), \label{eqn_control_law_consensus} \\
		&\bm{\tau}_{m,i}(\bm{x}_i) = -  \bm{M}(\bm{q}_i) (c_3 \dot{\bm{q}}_i + c_2 \bm{q}_i) + \bm{C}(\bm{x}_i) \dot{\bm{q}}_i + \bm{g}(\bm{q}_i) \label{eqn_control_law_model}
	\end{align}
	with the consensus error $\bm{e}_{q,i} = \sum\nolimits_{j \in \mathcal{N}_i} (\bm{q}_i - \bm{q}_j)$.
	The correction term $\bm{\tau}_{c,i}(\cdot)$ from the proposed adjoint MAS is designed later in \cref{subsection_adjoint_MAS}, and the functions $\hat{\bm{d}}_i(\cdot): \mathbb{R}_{0,+} \to \mathbb{R}^n$ serve as compensation terms for $\bm{d}(\bm{x}_i(\cdot))$, $\forall i \in \mathcal{V}$.
	The control gains $c_i \in \mathbb{R}$ with $i = 0, \cdots, 3$ are designed to satisfy the conditions derived later in \cref{remark_condition_c}.
	
	\begin{remark}
		The proposed control framework \eqref{eqn_control_law} consists of $4$ components.
		The consensus term $\tau_{s,i}$ is inspired by second-order consensus protocols \cite{lin2010consensus, hou2017consensus}, and is based on position and velocity consensus errors.
		The model-based term $\tau_{m,i}$ follows the classical computed-torque control \cite{middletone1986adaptive}, which is widely employed to compensate for known system dynamics.
		The disturbance compensation $\hat{\bm{d}}_i(t)$ is obtained via machine learning as shown in \cite{dai2024cooperative}. 
		Notably, the proposed approach differs from these methods by explicitly accounting for asynchronous online learning with heterogeneous computational delays.
		Finally, the correction term $\tau_{c,i}$ is newly introduced in this work. 
		It is constructed based on an adjoint multi-agent system that serves as a reference, together with an additional component designed to mitigate the query mismatch from asynchronous learning.
		Further details are provided in \cref{section_nonReciprocal_learning}.
		\looseness=-1
	\end{remark}
	
	Specifically, $\hat{\bm{d}}_i(t)$ is estimated by agent $i \in \mathcal{V}$ using an online ML with time-varying data sets $\mathbb{D}_i(t)$, which denotes
	\begin{align} \label{eqn_data_set}
		\mathbb{D}_i(t) = \{ \bm{x}_i(t_{i,k}), \bm{y}_i(t_{i,k}) \}_{k = \underline{k}_i(t), \cdots, \bar{k}_i(t)},
	\end{align}
	leading to $|\mathbb{D}_i(t)| \le \bar{N} \in \mathbb{N}_+$, $\forall t \in \mathbb{R}_{0,+}$ with $\bar{N} \in \mathbb{N}_+$.
	The discrete time sequence $\{ t_{i,k} \}_{k \in \mathbb{N}}$ comprises the time instances at which the ML model on agent $i$ becomes available for next computation, which is defined as $t_{i,0} = 0$, $t_{i,k + 1} = t_{i,k} + \Delta_i(t_{i,k})$, $\forall k \in \mathbb{N}$, where $\Delta_i(t_{i,k})$ is the computation time for ML initiated at $t_{i,k}$ on agent $i$.
	The index function $\bar{k}_i(\cdot): \mathbb{R}_{0,+} \to \mathbb{N}$ in \eqref{eqn_data_set} returns the closest index of the time instances for new learning process on agent $i$, i.e., $\bar{k}_i(t) = \argmax\nolimits_{k \in \mathbb{N}} \{ t_{i,k} | t_{i,k} + \Delta_i(t_{i,k}) \le t, \forall k \in \mathbb{N} \}$ with $\underline{k}_i(t) = \max\{ 0, \bar{k}_i(t) - \bar{N} + 1 \}$.
	The vector $\bm{y}_i \in \mathbb{R}^n$ in \eqref{eqn_data_set} is the noisy observation of unknown $\bm{d}(\bm{x}_i)$ evaluated at $\bm{x}_i$, which satisfies the following assumption.
	
	\begin{assumption}
		\label{assumption_data_set}
		The data pair $\{ \bm{x}_i(t_{i,k}), \bm{y}_i(t_{i,k}) \}$ is only available on agent $i \in \mathcal{V}$ at time instance $t_{i,k}$ with $k \in \mathbb{N}$, where $\bm{y}_i(t_{i,k}) = \bm{d}(\bm{x}_i(t_{i,k})) + \bm{w}_i(t_{i,k})$ with i.i.d measurement noise $\bm{w}_i(\cdot) = [w_{i,1}(\cdot), \cdots, w_{i,n}(\cdot)]^T$.
		Moreover, it holds $| w_{i,p}(t_{i,k}) | \le \bar{w}_p$, $\forall p = 1, \cdots, n$ with known $\bar{w}_p \in \mathbb{R}_+$.
	\end{assumption}
	
	\cref{assumption_data_set} ensures the local availability of data without data sharing between agents. 
	Moreover, \cref{assumption_data_set} permits noisy measurements $\bm{y}_i(\cdot)$, which can be practically obtained through numerical methods.
	
	Motivated by \textbf{Q1}–\textbf{Q4}, the control objective of this paper is formally stated as follows.
	
	\textbf{Problem:}
	Consider the uncertain multi-agent system \eqref{eqn_agent_EL_dynamics} interconnected via an undirected communication graph $\mathcal{G}$ satisfying \cref{assumption_dynamics,assumption_graph}.
	The goal is to design an asynchronous cooperative online learning strategy with heterogeneous computational delays $\Delta_i(\cdot)$ and to devise a distributed learning-based control law, such that practical second-order consensus in \cref{definition_consensus} is achieved.
	 
	To address this problem, a distributed control framework is proposed with asynchronous cooperative learning as in \cref{figure_framework}.
	
	\begin{figure}[t]
		\centering
		\includegraphics[width=0.48\textwidth]{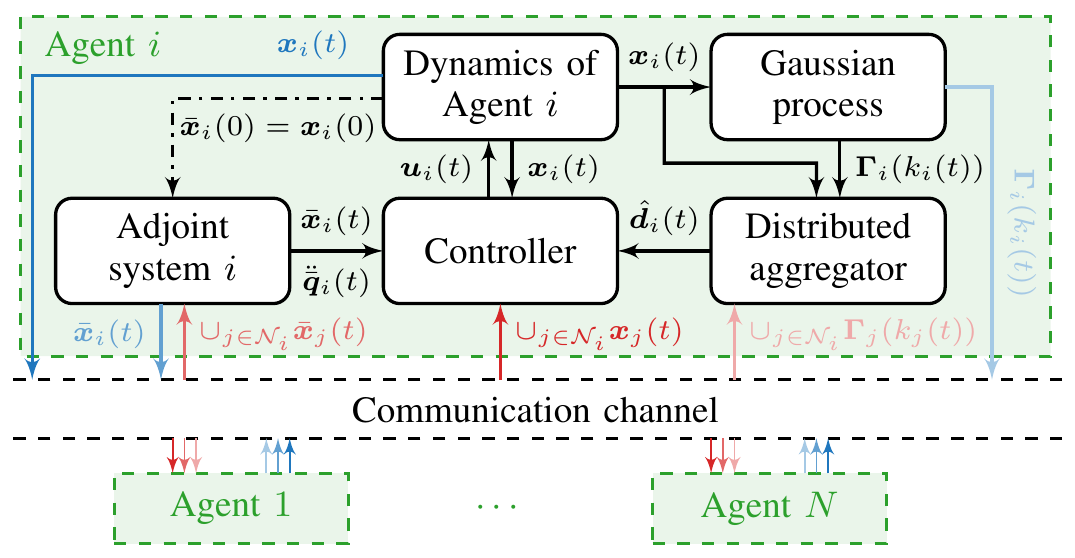}
		\caption{ 
			Block diagram of the proposed distributed control framework with asynchronous cooperative online learning, where the {\color{blue} blue} lines indicate the signal to be transmitted to neighbors and the {\color{red} red} lines shows the information received from neighbors.
			The dashed dot line infers the initialization step implemented only once at time $t=0$.
		}
		\label{figure_framework}
	\end{figure}
	
	%%%%%%%%%%%%%%%%%%%%%%%%%%%%%%%%%%%%%%%%%%%%%%%%%%%%%%%%%%%%%%%%%%%
	\section{Asynchronous Cooperative Online Learning}
	\label{section_nonReciprocal_learning}
	%%%%%%%%%%%%%%%%%%%%%%%%%%%%%%%%%%%%%%%%%%%%%%%%%%%%%%%%%%%%%%%%%%%
	In this section, a cooperative online learning strategy is proposed to answer \textbf{Q1} and \textbf{Q2}, which concern the feasibility and methodology of utilizing delayed neighboring predictions.
	
	%%%%%%%%%%%%%%%%%%%%%%%%%%%%%%%%%%%%%%%%%%%%%%%%%%%%%%%%%%%%%%%%%%%
	\subsection{Gaussian Process Regression}
	\label{subsection_GP}
	%%%%%%%%%%%%%%%%%%%%%%%%%%%%%%%%%%%%%%%%%%%%%%%%%%%%%%%%%%%%%%%%%%%%
	To infer the unknown function $\bm{d}(\cdot) = [d_1(\cdot), \cdots, d_n(\cdot)]^T$ in \eqref{eqn_agent_EL_dynamics}, GP regression is employed, which assumes $d_p(\cdot)$ with $p = 1, \cdots, n$ belongs to the reproduced kernel Hilbert space (RKHS) corresponding to kernel functions $\kappa_p(\cdot, \cdot): \mathbb{X} \times \mathbb{X} \to \mathbb{R}_{0,+}$.
	Moreover, each unknown function $d_p(\cdot)$ and kernel $\kappa_p(\cdot, \cdot)$ with $p = 1, \cdots, n$ satisfy the following assumption.
	\begin{assumption} \label{assumption_GP}
		The kernel function $\kappa_p(\cdot, \cdot)$ is stationary and Lipschitz continuous, which indicates the existence of a well-defined constant $L_{\kappa,p} \in \mathbb{R}_{0,+}$ such that $| \kappa_p(\bm{x}_1, \bm{x}) - \kappa_p(\bm{x}_2, \bm{x}) | \le L_{\kappa,p} \| \bm{x}_1 - \bm{x}_2 \|$ for any $\bm{x}_1, \bm{x}_2, \bm{x} \in \mathbb{X}$.
		Moreover, the unknown function $d_p(\cdot)$ has the bounded RKHS norm with well-defined positive constant $\bar{d}_p \in \mathbb{R}_{0,+}$, i.e., $\| d_p \|_{\kappa_p} = \langle d_p(\cdot), d_p(\cdot) \rangle_{\kappa_p}^{1/2} \le \bar{d}_p$ for any $p = 1, \cdots, n$. 
	\end{assumption}
	
	\cref{assumption_GP} is commonly found in most universal kernels such as squared exponential kernel and Mat$\acute{\mathrm{e}}$rn kernels.
	
	\begin{remark}
		The bounded RKHS norm in \cref{assumption_GP} is a standard modeling assumption in GP when deriving prediction error bounds \cite{srinivas2012information, hashimoto2022learning}, which imposes the smoothness on the unknown $d_p(\cdot)$. 
		In practice, the exact RKHS norm bound $\bar{d}_p$ is typically unknown, but it can be conservatively approximated using data-driven methods \cite{hashimoto2022learning, tokmak2024pacsbo, tokmak2025safe}. 
		While a tighter estimate of $\bar{d}_p$ leads to less conservative prediction error bounds, the theoretical analysis in this paper remains valid with any conservative upper bound. 
		Investigating a tighter bound $\bar{d}_p$ is an interesting direction for future work.
	\end{remark}

	Utilizing $\mathbb{D}_i(t)$ in \eqref{eqn_data_set} with $\bm{y}_i(\cdot) = [y_{i,1}(\cdot), \cdots, y_{i,n}(\cdot)]^T$ and measurement noise bound $\bar{w}_p$ for each dimension $p$, the posterior mean $\mu_p(\bm{x}, \mathbb{D}_i(t))$ and variance $\sigma_p^2(\bm{x}, \mathbb{D}_i(t))$ for $d_p(\cdot)$ with $p = 1, \cdots, n$ at any $\bm{x} \in \mathbb{X}$ denote
	\begin{align} \label{eqn_GP_prediction}
		&\mu_p(\bm{x},\! \mathbb{D}_i(t)) \!=\! \bm{\kappa}_p^T(\bm{x},\! \mathbb{D}_i(t)) \bar{\bm{K}}_p^{-1}(\mathbb{D}_i(t)) \bm{\zeta}_p(\mathbb{D}_i(t)), \\
		&\sigma_p^2(\bm{x},\! \mathbb{D}_i(t)) \!=\! \sigma_{s,p}^2 \!-\! \bm{\kappa}_p^T(\bm{x},\! \mathbb{D}_i(t)) \bar{\bm{K}}_p^{-1}(\mathbb{D}_i(t)) \bm{\kappa}_p(\bm{x},\! \mathbb{D}_i(t)), \nonumber
	\end{align}
	where $\sigma_{s,p}^2 = {\kappa_p(\bm{x}, \bm{x})}$ is constant for any $\bm{x} \in \mathbb{X}$ due to the stationary kernel. 
	The kernel vector is $\bm{\kappa}_p^T(\bm{x}, \mathbb{D}_i(t)) = [\kappa_p(\bm{x}, \bm{x}_i(t_{i, \underline{k}_i(t)})), \cdots, \kappa_p(\bm{x}, \bm{x}_i(t_{i, \bar{k}_i(t)}))]^T$ and the kernel gram matrix is $\bar{\bm{K}}_p(\mathbb{D}_i(t)) = \bm{K}_p(\mathbb{D}_i(t)) + \bar{w}_p^2 \bm{I}_{| \mathbb{D}_i(t) |}$ with $\bm{K}_p(\mathbb{D}_i(t)) = [\kappa_p(\bm{x}_i(t_{i,k_1}), \bm{x}_i(t_{i,k_2}))]_{k_1, k_2 = \underline{k}_i(t), \cdots, \bar{k}_i(t)}$.
	The concatenated output $\bm{\zeta}_p(\mathbb{D}_i(t))$, $\forall p = 1, \cdots, n$ is defined as $\bm{\zeta}_p(\mathbb{D}_i(t)) = [y_{i,p}(t_{i, \underline{k}_i(t)}), \cdots, y_{i,p}(t_{i, \bar{k}_i(t)})]^T$.
	As a calibrated learning-based model, GP provides the posterior mean $\mu_p(\cdot, \mathbb{D}_i(t))$ as the estimation of $d_p(\cdot)$ on agent $i \in \mathcal{V}$ at time $t \in \mathbb{R}_{0,+}$, and the posterior variance $\sigma_p^2(\cdot, \mathbb{D}_i(t))$ for prediction performance quantification as follows.
	
	\begin{lemma} \label{lemma_GP_error_bound}
		Apply GP regression on agent $i \in \mathcal{V}$ to estimate$\bm{d}(\cdot)$ satisfying \cref{assumption_GP,assumption_data_set} with $| \mathbb{D}_i(t) | \le \bar{N}$.
		Then, the prediction error of concatenated $\bm{\mu}(\bm{x}, \mathbb{D}) = [\mu_1(\bm{x}, \mathbb{D}), \cdots, \mu_n(\bm{x}, \mathbb{D})]^T$ is bounded by
		\begin{align} \label{eqn_prediction_error_bound}
			\| \bm{d}(\bm{x}_i(t)) - \bm{\mu}(\bm{x}_i(t), \mathbb{D}_i(t)) \| \le \beta \sigma(\bm{x}_i(t), \mathbb{D}_i(t))
		\end{align}
		with $\bm{x}_i(t) \in \mathbb{X}$ for any $t \in \mathbb{R}_{0,+}$, where $\beta = (\sum_{p = 1}^n (\bar{d}_p^2 + \bar{N}))^{1/2}$ and $\sigma(\bm{x}, \mathbb{D}) = \| [\sigma_1(\bm{x}, \mathbb{D}), \cdots, \sigma_n(\bm{x}, \mathbb{D})]^T \|$.
	\end{lemma}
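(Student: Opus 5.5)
The plan is to prove a deterministic per-dimension bound $|d_p(\bm{x}) - \mu_p(\bm{x}, \mathbb{D})| \le \beta_p \sigma_p(\bm{x}, \mathbb{D})$ with $\beta_p = (\bar{d}_p^2 + \bar{N})^{1/2}$, valid for every $\bm{x} \in \mathbb{X}$, and then concatenate the dimensions. Since $\bm{\mu}$ is fixed once $\mathbb{D}_i(t)$ is fixed, and the bound is deterministic (bounded noise, no probability), evaluating it at $\bm{x} = \bm{x}_i(t)$ is legitimate. Hence nothing about the time-varying data set matters beyond $|\mathbb{D}_i(t)| \le \bar{N}$.

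\emph{Step 1: RKHS decomposition of the error.} Fix $p$ and write $\bm{\zeta}_p = \bm{f}_p + \bm{w}_p$, where $\bm{f}_p$ stacks the values $d_p$ at the data points. The error splits as
\begin{align*}
d_p(\bm{x}) - \mu_p(\bm{x}) = \big(d_p(\bm{x}) - \bm{\kappa}_p^T \bar{\bm{K}}_p^{-1} \bm{f}_p\big) - \bm{\kappa}_p^T \bar{\bm{K}}_p^{-1} \bm{w}_p .
\end{align*}
The first term is handled by the standard kernel ridge regression argument used in \cite{srinivas2012information, hashimoto2022learning}. One works in the RKHS with the regularized inner product, or equivalently uses the representer identity $d_p(\bm{x}) = \langle d_p, \kappa_p(\cdot,\bm{x})\rangle_{\kappa_p}$. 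This expresses the term as $\langle d_p, \kappa_p(\cdot,\bm{x}) - \sum_k [\bar{\bm{K}}_p^{-1}\bm{\kappa}_p]_k \kappa_p(\cdot,\bm{x}_k)\rangle$.

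\emph{Step 2: Cauchy--Schwarz on each term.} For the noise-free part, compute the squared RKHS norm of the residual function $\kappa_p(\cdot,\bm{x}) - \sum_k \alpha_k \kappa_p(\cdot,\bm{x}_k)$ with $\bm{\alpha} = \bar{\bm{K}}_p^{-1}\bm{\kappa}_p$. This norm equals $\sigma_{s,p}^2 - 2\bm{\kappa}_p^T\bm{\alpha} + \bm{\alpha}^T\bm{K}_p\bm{\alpha}$. Using $\bm{K}_p = \bar{\bm{K}}_p - \bar{w}_p^2 \bm{I}$, this is $\sigma_p^2(\bm{x}) - \bar{w}_p^2 \|\bm{\alpha}\|^2$.

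For the noise part, $|\bm{\alpha}^T \bm{w}_p| \le \|\bm{\alpha}\| \, \|\bm{w}_p\| \le \|\bm{\alpha}\| \, \bar{w}_p \sqrt{\bar{N}}$, using \cref{assumption_data_set} and $|\mathbb{D}_i(t)| \le \bar{N}$. Thus the error is bounded by
\begin{align*}
\bar{d}_p \sqrt{\sigma_p^2 - \bar{w}_p^2\|\bm{\alpha}\|^2} + \sqrt{\bar{N}}\, \bar{w}_p\|\bm{\alpha}\| .
\end{align*}
Cauchy--Schwarz in $\mathbb{R}^2$, applied to $(\bar{d}_p, \sqrt{\bar{N}})$ and $(\sqrt{\sigma_p^2 - \bar{w}_p^2\|\bm{\alpha}\|^2}, \bar{w}_p\|\bm{\alpha}\|)$, then gives $\le (\bar{d}_p^2 + \bar{N})^{1/2}\sigma_p(\bm{x})$. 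This is the main obstacle: making the noise contribution absorb exactly into $\sigma_p$ without a separate additive term. The identity $\sigma_p^2 = \|\text{residual}\|^2_{\kappa_p} + \bar{w}_p^2\|\bm{\alpha}\|^2$ is the key fact I would verify carefully. As a by-product it shows $\sigma_p^2 - \bar{w}_p^2\|\bm{\alpha}\|^2 \ge 0$, so the square root is well defined.

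\emph{Step 3: concatenation.} Summing squares over $p$ gives
\begin{align*}
\|\bm{d} - \bm{\mu}\|^2 \le \sum_p \beta_p^2 \sigma_p^2 \le \Big(\sum_p \beta_p^2\Big)\Big(\sum_p \sigma_p^2\Big) = \beta^2 \sigma^2(\bm{x}, \mathbb{D}).
\end{align*}
Taking square roots yields \eqref{eqn_prediction_error_bound}.
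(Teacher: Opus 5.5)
Your proof is correct, and its skeleton matches the paper's: a deterministic scalar bound per output dimension under bounded noise, then stacking via $\sum_p\beta_p^2\sigma_p^2\le(\sum_p\beta_p^2)(\sum_p\sigma_p^2)$. The appendix with the paper's proof is not included in the source I was given, so the following is inferred from its remarks and citations. The difference appears to be in the scalar step. The paper most likely takes the known deterministic bounded-noise bound of \cite{hashimoto2022learning} and relaxes it using $|\mathbb{D}_i(t)|\le\bar{N}$, whereas you derive the bound from scratch.

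Your key identity checks out, because $\bm{\alpha}^T\bm{K}_p\bm{\alpha}=\bm{\kappa}_p^T\bar{\bm{K}}_p^{-1}\bm{\kappa}_p-\bar{w}_p^2\|\bm{\alpha}\|^2$. It helps to see what the identity encodes. Work on $\mathcal{H}_{\kappa_p}\times\mathbb{R}^{|\mathbb{D}|}$ with squared norm $\|g\|_{\kappa_p}^2+\bar{w}_p^{-2}\|\bm{v}\|^2$.
\begin{itemize}
\item The observation functionals $g(\bm{x}_k)+v_k$ have representers $(\kappa_p(\cdot,\bm{x}_k),\bar{w}_p^2\bm{e}_k)$, whose Gram matrix is exactly $\bar{\bm{K}}_p$.
\item Your residual $r=\kappa_p(\cdot,\bm{x})-\sum_k\alpha_k\kappa_p(\cdot,\bm{x}_k)$, paired with $-\bar{w}_p^2\bm{\alpha}$, is what remains of $(\kappa_p(\cdot,\bm{x}),\bm{0})$ after orthogonal projection onto their span. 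Hence it has squared norm $\sigma_p^2$.
\item The true pair $(d_p,\bm{w}_p)$ has squared norm at most $\bar{d}_p^2+\bar{N}$.
\item The error $d_p(\bm{x})-\mu_p(\bm{x},\mathbb{D})$ is exactly the inner product of these two pairs.
\end{itemize}
So your two separate Cauchy--Schwarz steps, recombined in $\mathbb{R}^2$, are simply one Cauchy--Schwarz step in this product space.

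If you apply it instead to $(d_p,\bm{w}_p)$ minus the minimum-norm data-consistent element, you get the sharper, data-dependent factor $(\bar{d}_p^2+|\mathbb{D}|-\bm{\zeta}_p^T\bar{\bm{K}}_p^{-1}\bm{\zeta}_p)^{1/2}$ of the cited result. The lemma only claims $(\bar{d}_p^2+\bar{N})^{1/2}$, so your self-contained route loses nothing here. The citation route saves space and keeps the sharper constant available.
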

	\begin{IEEEproof}
		See appendix.
	\end{IEEEproof}
	
	\begin{remark}
		The prediction error bound in \cref{lemma_GP_error_bound} is deterministic and derived from RKHS properties under \cref{assumption_data_set}, rather than a probabilistic upper confidence bound \cite{srinivas2012information}.
		Instead of a confidence level, the coefficient $\beta$ represents a worst-case upper bound depending on the maximal size $\bar{N}$ of $\mathbb{D}$.
		Although $\beta$ increases with the data set size due to the use of a distribution-independent bound, the variance $\sigma(\cdot,\cdot)$ decreases as more informative samples are collected via online learning \cite{williams2006gaussian}. 
		Consequently, the overall prediction error bound remains meaningful and captures the improvement in learning accuracy through the reduction of uncertainty.
	\end{remark}
	
	Moreover, we consider that the computational time of GP is bounded and satisfies the following assumption.
	
	\begin{assumption} \label{assumption_computation_time}
		There exist well-defined constants $\underline{\Delta}_i \in \mathbb{R}_{0,+}$ for all $i \in \mathcal{V}$, such that $\Delta_i(t_{i,k}) \le \underline{\Delta}_i$ for any $k \in \mathbb{N}$.
	\end{assumption}
	
	The bounded $\Delta_i(\cdot)$ in \cref{assumption_computation_time} is resulted from the bounded size of $\mathbb{D}_i(\cdot)$in \eqref{eqn_data_set}, i.e., $| \mathbb{D}_i(t_{i,k}) | \le \bar{N}$, $\forall i \in \mathcal{V}$ and $k \in \mathbb{N}$. 
	In particular, using incremental Cholesky rank-one updates allows a computational complexity $\Delta_i(\cdot) \sim \mathcal{O}(|\mathbb{D}_i(\cdot)|^2)$ for online GP \cite{williams2006gaussian}. 
	Consequently, the computation time remains uniformly bounded as in \cref{assumption_computation_time}.
	While online learning continuously collects new data samples, the data set construction in \eqref{eqn_data_set} incorporates data deletion to maintain the bounded $| \mathbb{D}_i(\cdot) |$. 
	Specifically, if $| \mathbb{D}_i(t_{i,k}^-) | = \bar{N}$, where $t_{i,k}^-$ denotes the time instant immediately before the model update at $t_{i,k}$, the oldest data pair is removed before adding the newly collected sample.
	In practice, computational efficiency can be improved using e.g., LoG-GP \cite{lederer2021gaussian} and Sky-GP \cite{yang2026streaming}.
	\looseness=-1 
	
	\begin{remark}
		The proposed framework explicitly models computational delays induced by online GP inference and model updates with complexity $\mathcal{O}(\bar{N}^2)$. 
		Communication delays ($\sim \!\! 1$ms, \cite{zhang2020modular}) between agents are assumed negligible relative to the computation delay ($>\!\!50$ms, \cite{yang2026streaming}). 
		Extending the framework to simultaneously account for both computational and communication delays is considered as future work.
		\looseness=-1
	\end{remark}
	
	For notational simplicity, define $\bm{\mu}_i\!(k) \!\!=\!\! \bm{\mu}(\bm{x}_i\!(t_{i, k}),\! \mathbb{D}_i\!(t_{i, k}))$, because $\bm{x}_i$ and $\mathbb{D}_i$ always have the same time stamp.
	
	%%%%%%%%%%%%%%%%%%%%%%%%%%%%%%%%%%%%%%%%%%%%%%%%%%%%%%%%%%%%%%%%%%%
	\subsection{Cooperative Learning with Delayed Predictions }
	\label{subsection_aggregation_learning}	
	
	Before presenting the proposed asynchronous cooperative learning strategy, we first analyze the inference accuracy of predictions to answer \textbf{Q1}. 
	Specifically, we examine how well the prediction $\bm{\mu}_j(k_j(t))$ of agent $j \in {i} \cup \mathcal{N}_i$ approximates the true value of the unknown function $\bm{d}(\bm{x}_i(t))$ at time $t$.
	\begin{lemma} \label{lemma_delayed_prediction}
		Assume that GP models on all agents $i \in \mathcal{V}$ satisfy \cref{assumption_GP,assumption_data_set}. 
		Then, the prediction error $\| \bm{d}(\bm{x}_i(t)) \!-\! \bm{\mu}_j(k_j(t)) \|$, $\forall t \in \mathbb{R}_{0,+}$ is bounded by
		\begin{align} \label{eqn_delayed_prediction_error_bound}
			\| \bm{d}(\bm{x}_i\!(t)) \!\!-\!\! \bm{\mu}_j(k_j\!(t)) \| \!\!\le\!\! \eta_{i,j}\!(t) \!\!=\!\! \beta \sigma_j\!(k_j(t)) \!\!+\!\! L_d \!\|\! \tilde{\bm{x}}_{i,j}\!(t) \!\|^{\!\frac{1}{2}\!},\!
		\end{align}
		where $L_d = ( 2 \sum\nolimits_{p = 1}^n \bar{d}_p^2 L_{\kappa, p} )^{\frac{1}{2}}$, $\tilde{\bm{x}}_{i,j}(t) = \bm{x}_i(t) - \bm{x}_j(t_{j, k_j(t)})$ and $\sigma_j(k_j(t)) = \sigma(\bm{x}_j(t_{j, k_j(t)}), \mathbb{D}_j(t_{j, k_j(t)}))$.
	\end{lemma}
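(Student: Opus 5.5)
The plan is a triangle inequality, with the intermediate point being the true function evaluated at agent $j$'s query point $\bm{x}_j(t_{j,k_j(t)})$:
\begin{align*}
\| \bm{d}(\bm{x}_i(t)) - \bm{\mu}_j(k_j(t)) \| \le \| \bm{d}(\bm{x}_j(t_{j,k_j(t)})) - \bm{\mu}_j(k_j(t)) \| + \| \bm{d}(\bm{x}_i(t)) - \bm{d}(\bm{x}_j(t_{j,k_j(t)})) \|.
\end{align*}
The first term is the GP prediction error of agent $j$ at its own query point, using its own data set $\mathbb{D}_j(t_{j,k_j(t)})$. Since $\bm{x}_j$ and $\mathbb{D}_j$ carry the same time stamp, \cref{lemma_GP_error_bound} applies directly and bounds it by $\beta \sigma_j(k_j(t))$. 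The time argument there is arbitrary, so evaluating it at $t_{j,k_j(t)}$ is legitimate.

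The second term is a continuity (modulus) estimate for $\bm{d}$. The only tool available is the RKHS structure from \cref{assumption_GP}, not Lipschitz continuity of $d_p$ itself, so I will work componentwise. For each $p$, the reproducing property together with Cauchy--Schwarz gives
\begin{align*}
| d_p(\bm{x}_1) - d_p(\bm{x}_2) | = | \langle d_p, \kappa_p(\cdot,\bm{x}_1) - \kappa_p(\cdot,\bm{x}_2) \rangle_{\kappa_p} | \le \bar{d}_p \| \kappa_p(\cdot,\bm{x}_1) - \kappa_p(\cdot,\bm{x}_2) \|_{\kappa_p}.
\end{align*}
Expanding the RKHS norm gives
\begin{align*}
\| \kappa_p(\cdot,\bm{x}_1) - \kappa_p(\cdot,\bm{x}_2) \|_{\kappa_p}^2 = \kappa_p(\bm{x}_1,\bm{x}_1) - \kappa_p(\bm{x}_2,\bm{x}_1) + \kappa_p(\bm{x}_2,\bm{x}_2) - \kappa_p(\bm{x}_1,\bm{x}_2).
\end{align*}
I group this as two differences in the first argument with the second argument fixed. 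The Lipschitz property of $\kappa_p$ from \cref{assumption_GP} then bounds each difference by $L_{\kappa,p}\|\bm{x}_1 - \bm{x}_2\|$, so the whole expression is at most $2 L_{\kappa,p} \| \bm{x}_1 - \bm{x}_2 \|$. Hence $|d_p(\bm{x}_1) - d_p(\bm{x}_2)|^2 \le 2 \bar{d}_p^2 L_{\kappa,p} \|\bm{x}_1 - \bm{x}_2\|$.

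Summing over $p = 1, \dots, n$ and taking square roots gives
\begin{align*}
\| \bm{d}(\bm{x}_1) - \bm{d}(\bm{x}_2) \| \le \Big( 2 \sum\nolimits_{p=1}^n \bar{d}_p^2 L_{\kappa,p} \Big)^{1/2} \| \bm{x}_1 - \bm{x}_2 \|^{1/2} = L_d \| \bm{x}_1 - \bm{x}_2 \|^{1/2}.
\end{align*}
Setting $\bm{x}_1 = \bm{x}_i(t)$ and $\bm{x}_2 = \bm{x}_j(t_{j,k_j(t)})$ makes this difference $\tilde{\bm{x}}_{i,j}(t)$. Combining the two terms yields $\eta_{i,j}(t)$, and the argument covers $j = i$ as well.

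The main obstacle is this Hölder-$\tfrac12$ step: the constant $L_d$ in the statement has to be matched exactly. I expect it to go through as above, with the factor $2$ coming from the two kernel differences. Stationarity of $\kappa_p$ is not needed for this step; it only gives $\kappa_p(\bm{x},\bm{x}) = \sigma_{s,p}^2$.
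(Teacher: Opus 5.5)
Your proposal is correct and follows essentially the same route as the paper's argument. You use the triangle inequality through $\bm{d}(\bm{x}_j(t_{j,k_j(t)}))$, apply \cref{lemma_GP_error_bound} at agent $j$'s own time stamp to get $\beta \sigma_j(k_j(t))$, and combine the reproducing property, Cauchy--Schwarz and the first-argument Lipschitz bound on $\kappa_p$ (with kernel symmetry giving the factor $2$) into $|d_p(\bm{x}_1)-d_p(\bm{x}_2)|^2 \le 2\bar{d}_p^2 L_{\kappa,p}\|\bm{x}_1-\bm{x}_2\|$, which sums to exactly $L_d\|\tilde{\bm{x}}_{i,j}(t)\|^{1/2}$.
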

	\begin{IEEEproof}
		See appendix.
	\end{IEEEproof}
	
	The bound $\eta_{i,j}(t)$ in \cref{lemma_delayed_prediction} extends the standard GP prediction error bound in \cref{lemma_GP_error_bound} by incorporating an additional term which accounts for misalignment between agents. 
	
	\begin{remark}
		The square-root dependence on the state mismatch $\tilde{\bm{x}}_{i,j}$ in \cref{lemma_delayed_prediction} originates from the RKHS norm bound in \cref{assumption_GP}, and differs from linear Lipschitz bounds. 
		In contrast to classical Lipschitz conditions, \cref{lemma_delayed_prediction} implies Hölder continuity of order $1/2$ for $\bm{d}(\cdot)$.
		This relaxes the requirement of bounded derivatives, which is practically difficult to obtain.
		Moreover,this square-root form leads to sublinear growth for large mismatches, since $\| \tilde{\bm{x}}{i,j} \|^{1/2} < \| \tilde{\bm{x}}{i,j} \|$ when $\| \tilde{\bm{x}}{i,j} \| > 1$.
		However, it also implies a slower decay rate near equilibrium, as $\| \tilde{\bm{x}}{i,j} \|^{1/2} > \| \tilde{\bm{x}}{i,j} \|$ when $\| \tilde{\bm{x}}{i,j} \| < 1$.
		Consequently, the convergence rate of the error bound becomes locally slower.
	\end{remark}
	
	With the saved predictions $\bm{\mu}_j(k_j(t))$ from agents $j \in  \mathcal{N}_i$, the compensation $\hat{\bm{d}}_i(t)$ in \eqref{eqn_control_law} is designed as the aggregation of all stored predictions to address \textbf{Q2}, which is written as
	\begin{align} \label{eqn_hat_d_i}
		\hat{\bm{d}}_i(t) = \sum\nolimits_{j \in \{ i, \mathcal{N}_i \}} \omega_{i,j}(t) \bm{\mu}_j(k_j(t)),
	\end{align} 
	where $\omega_{i,j}(\cdot): \mathbb{R}_{0,+} \to [0,1] \subset \mathbb{R}$ are the time-varying aggregation weights for the prediction from agent $j$ used on agent $i$.
	In this paper, the construction of $\omega_{i,j}(\cdot)$ is inspired by the generalized Product-of-Experts (gPoE, \cite{cao2014generalized}), where predictions associated with smaller GP uncertainty and smaller query-point mismatch receive larger aggregation weights. 
	Specifically, the aggregation weight $\omega_{i,j}(\cdot)$ is designed as
	\begin{align} \label{eqn_aggregation_weight}
		\omega_{i,j}(t) = \frac{\omega_{i,j}^* \rho_{i,j}(t) \eta_{i,j}^{-2}(t)}{ \sum_{s \in \{ i, \mathcal{N}_i \}} \omega_{i,s}^* \rho_{i,s}(t) \eta_{i,s}^{-2}(t) },
	\end{align}
	where $\omega_{i,j}^* \in \mathbb{R}_{0,+}$ are pre-defined constants to show the prior confidence of GP models $j$ on agent $i$.
	The term $\rho_{i,j}(t) = \gamma(\eta_{i,j}(t) - \beta \sigma_s)$, $\forall i, j \in \mathcal{V}$ with $\sigma_s = (\sum_{p = 1}^n \sigma_p^2)^{\frac{1}{2}}$ and $\bar{\eta} \in \mathbb{R}_+$ reflects the informativeness \cite{cao2014generalized}, where $\gamma(\cdot): \mathbb{R} \to \{0, 1\}$ is the Heaviside step function.
	
	\begin{remark}
		The weights $\eta_{i,j}(t)$ are directly computable online for all $t \in \mathbb{R}_{0,+}$, since it depends only on the locally available state $\bm{x}_i(t)$ and the delayed neighboring information ${\bm{x}_j(t_{j,k_j(t)}), \sigma_j(k_j(t))}$ transmitted by agent $j \in \mathcal{N}_i$.
	\end{remark}
	
	To quantify the accuracy of the aggregated prediction, we present the following theorem. 
	
	\begin{theorem} \label{theorem_delayed_agent_aggregated_prediction}
		Let all Gaussian process models in all agents satisfy \cref{assumption_data_set,assumption_GP}, and apply the aggregation strategy as in \eqref{eqn_hat_d_i} and \eqref{eqn_aggregation_weight}.
		Then, the prediction error of each agent $i \in \mathcal{V}$ is bounded for any $t \in \mathbb{R}_{0,+}$ by
		\begin{align} \label{eqn_GP_error_bound}
			\|\! \bm{d}(\bm{x}_i\!(t)\!) \!\!-\!\! \hat{\bm{d}}_i\!(t) \!\| \!\!\le\!\! \tilde{\eta}_i\!(t) \!\!=\!\! \omega_i^* \!\big(\! \sum\nolimits_{j \!\in\! \{\! i, \!\mathcal{N}_i \!\}} \!\! \omega_{i,j}^* \rho_{i,\!j}(t) \eta_{i,\!j}^{\!-\!2}\!(t) \big)^{\!\!-\!\frac{1}{2}},\!
		\end{align}
		where $\omega_i^* = ( \sum\nolimits_{j \in \{ i, \mathcal{N}_i \}} \omega_{i,j}^* )^{1/2}$.
	\end{theorem}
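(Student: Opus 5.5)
The argument is a weighted triangle inequality followed by Cauchy--Schwarz. Write $S_i(t) = \sum_{s \in \{i,\mathcal{N}_i\}} \omega_{i,s}^* \rho_{i,s}(t) \eta_{i,s}^{-2}(t)$, and assume $S_i(t) > 0$ so that \eqref{eqn_aggregation_weight} is well defined. Since the weights satisfy $\omega_{i,j}(t) \ge 0$ and $\sum_j \omega_{i,j}(t) = 1$, we can write
\begin{align*}
\bm{d}(\bm{x}_i(t)) - \hat{\bm{d}}_i(t) = \sum\nolimits_{j \in \{i,\mathcal{N}_i\}} \omega_{i,j}(t) \big( \bm{d}(\bm{x}_i(t)) - \bm{\mu}_j(k_j(t)) \big).
\end{align*}
Taking norms and applying \cref{lemma_delayed_prediction} to each summand gives
\begin{align*}
\| \bm{d}(\bm{x}_i(t)) - \hat{\bm{d}}_i(t) \| \le \sum\nolimits_{j} \omega_{i,j}(t) \eta_{i,j}(t) = S_i^{-1}(t) \sum\nolimits_{j} \omega_{i,j}^* \rho_{i,j}(t) \eta_{i,j}^{-1}(t).
\end{align*}
Terms with $\rho_{i,j}(t) = 0$ vanish on both sides. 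They also contribute nothing to $S_i$, so only indices with $\rho_{i,j}(t) = 1$ matter.

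Next I apply Cauchy--Schwarz to the remaining sum. Write each term as a product of $\sqrt{\omega_{i,j}^*}$ and $\sqrt{\omega_{i,j}^* \rho_{i,j}}\, \eta_{i,j}^{-1}$, using $\rho_{i,j}^2 = \rho_{i,j}$ because $\rho_{i,j} \in \{0,1\}$. This yields
\begin{align*}
\sum\nolimits_j \omega_{i,j}^* \rho_{i,j} \eta_{i,j}^{-1} \le \Big( \sum\nolimits_j \omega_{i,j}^* \Big)^{1/2} \Big( \sum\nolimits_j \omega_{i,j}^* \rho_{i,j} \eta_{i,j}^{-2} \Big)^{1/2} = \omega_i^* S_i^{1/2}(t).
\end{align*}
Combining the two displays gives the bound $\omega_i^* S_i^{-1/2}(t) = \tilde{\eta}_i(t)$, which is exactly \eqref{eqn_GP_error_bound}.

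The main obstacle is the degenerate case and the role of the indicator $\rho_{i,j}$. If $S_i(t) = 0$, for instance when every $\rho_{i,j}$ vanishes or every $\omega_{i,j}^*$ is zero, then the weights are undefined. With the convention $0^{-1/2} = \infty$ the bound holds trivially, and one could alternatively fall back to $\beta \sigma_s$. I will state this convention explicitly.

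I will also check that $\eta_{i,j}(t) > 0$ whenever $\rho_{i,j} = 1$, so that $\eta_{i,j}^{-2}$ is finite. When $\eta_{i,j} = 0$, the prediction $\bm{\mu}_j(k_j(t))$ is exact, and a limiting argument or the same convention handles it.

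Finally, I need \cref{lemma_delayed_prediction} to apply with $j = i$ as well. This holds because $\tilde{\bm{x}}_{i,i}$ is simply the agent's own delay mismatch.
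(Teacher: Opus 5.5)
Your proof is correct and follows essentially the same route as the paper's: write the error as the convex combination $\sum_j \omega_{i,j}(t)\big(\bm{d}(\bm{x}_i(t))-\bm{\mu}_j(k_j(t))\big)$, bound it by $\sum_j \omega_{i,j}(t)\eta_{i,j}(t)$ using the triangle inequality and \cref{lemma_delayed_prediction}, then apply Cauchy--Schwarz (with $\rho_{i,j}^2=\rho_{i,j}$) to the numerator to reach $\omega_i^* S_i^{-1/2}(t)$. The case $\eta_{i,j}(t)=0$ that you flag cannot occur for a nontrivial kernel: $\beta\ge(n\bar{N})^{1/2}>0$ and the posterior variance with regularization $\bar{w}_p^2>0$ is strictly positive, so $\eta_{i,j}(t)\ge\beta\sigma_j(k_j(t))>0$ and no limiting argument is needed.
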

	\begin{IEEEproof}
		See appendix.
	\end{IEEEproof}
	
	\cref{theorem_delayed_agent_aggregated_prediction} shows a time-varying estimation error bound $\tilde{\eta}_i(\cdot)$ for the aggregated prediction, whose value is related to $\eta_{i,j}(\cdot)$ from agent $i$ and its neighbors $j \in \mathcal{N}_i$ for all $i \in \mathcal{V}$. 
	It is important to note that \cref{theorem_delayed_agent_aggregated_prediction} does not aim to establish posterior contraction or asymptotic convergence of GP models. 
	Such results typically rely on aggregation at identical query points or synchronous prediction, which are not applicable under the heterogeneous asynchronous computational delays considered in this work. 
	Instead, \cref{theorem_delayed_agent_aggregated_prediction} provides a control-oriented guarantee that directly enables the stability and performance analysis in \cref{section_controller}.

	%%%%%%%%%%%%%%%%%%%%%%%%%%%%%%%%%%%%%%%%%%%%%%%%%%%%%%%%%%%%%%%%%%%
	\section{Distributed Second-Order Consensus Control}
	\label{section_controller}
	
	This section answers the question \textbf{Q3} about the control law design.
	Specifically, the correction term $\bm{\tau}_{c,i}(\cdot)$, $\forall i \in \mathcal{V}$ in \eqref{eqn_control_law} is devised in \cref{subsection_adjoint_MAS}.
	Then, the performance of MAS is analyzed in \cref{subsection_performance} to answer \textbf{Q4}.
	
	%%%%%%%%%%%%%%%%%%%%%%%%%%%%%%%%%%%%%%%%%%%%%%%%%%%%%%%%%%%%%%%%%%%
	\subsection{Adjoint Multi-agent System and Correction Input}
	\label{subsection_adjoint_MAS}
	%%%%%%%%%%%%%%%%%%%%%%%%%%%%%%%%%%%%%%%%%%%%%%%%%%%%%%%%%%%%%%%%%%%
	In this subsection, the control law \eqref{eqn_control_law} is completed by specifying the correction term $\bm{\tau}_{c,i}(\cdot)$ for all agents $i \in \mathcal{V}$. 
	We design the control correction input term $\bm{\tau}_{c,i}(\cdot)$ by constructing an adjoint MAS composed of $N$ agents as
	\begin{align} \label{eqn_adjoint_dynamics}
		\ddot{\bar{\bm{q}}}_i = - \bar{c}_0 \bar{\bm{e}}_{q,i} - \bar{c}_1 \dot{\bar{\bm{e}}}_{q,i} - \bar{c}_3 \dot{\bar{\bm{q}}}_i
	\end{align}
	with joint agent coordinate $\bar{\bm{q}}_i \in \mathbb{X}_q \subset \mathbb{R}^n$, adjoint consensus error defined by $\bar{\bm{e}}_{q,i} = \sum_{j \in \mathcal{N}_i} (\bar{\bm{q}}_i - \bar{\bm{q}}_j)$, and the initial state values are $\bar{\bm{q}}_i(0) = \bm{q}_i(0)$, $\dot{\bar{\bm{q}}}_i(0) = \dot{\bm{q}}_i(0)$.
	The choice of gains $\bar{c}_0, \bar{c}_1, \bar{c}_3 \in \mathbb{R}$ determines the performance of the adjoint system \eqref{eqn_adjoint_dynamics} as shown below.
	
	\begin{lemma} \label{remark_condition_c_bar}
		Consider an adjoint MAS \eqref{eqn_adjoint_dynamics} with $N$ agents and any topology.
		Choose $\bar{c}_0, \bar{c}_1, \bar{c}_3 \in \mathbb{R}_+$ satisfying $\bar{c}_1 \bar{c}_3 \ge \bar{c}_0$, then the adjoint MAS \eqref{eqn_adjoint_dynamics} achieves second-order consensus defined in \cref{definition_consensus} with $\epsilon_0 = \epsilon_1 = 0$.
	\end{lemma}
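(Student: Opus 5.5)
The plan is to exploit the linearity of \eqref{eqn_adjoint_dynamics} and decouple it along the eigenvectors of the Laplacian. Stacking $\bar{\bm{Q}} = [\bar{\bm{q}}_1^T, \cdots, \bar{\bm{q}}_N^T]^T$, the adjoint MAS reads
\begin{align*}
	\ddot{\bar{\bm{Q}}} = - \bar{c}_0 (\bm{L} \otimes \bm{I}_n) \bar{\bm{Q}} - \big( (\bar{c}_1 \bm{L} + \bar{c}_3 \bm{I}_N) \otimes \bm{I}_n \big) \dot{\bar{\bm{Q}}}.
\end{align*}
Since $\mathcal{G}$ is undirected, $\bm{L}$ is symmetric positive semidefinite. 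We therefore take an orthogonal $\bm{U}$ with $\bm{U}^T \bm{L} \bm{U} = \mathrm{diag}(\lambda_1, \cdots, \lambda_N)$, $0 = \lambda_1 \le \lambda_2 \le \cdots \le \lambda_N$, and $\bm{U}$'s first column equal to $\bm{1}_N / \sqrt{N}$. In the coordinates $\bm{z} = (\bm{U}^T \otimes \bm{I}_n) \bar{\bm{Q}}$ the system splits into $N$ independent blocks
\begin{align*}
	\ddot{\bm{z}}_k + (\bar{c}_1 \lambda_k + \bar{c}_3) \dot{\bm{z}}_k + \bar{c}_0 \lambda_k \bm{z}_k = \bm{0}, \quad k = 1, \cdots, N.
\end{align*}

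Next I will treat the two kinds of modes separately. For $\lambda_k > 0$, the characteristic polynomial $s^2 + (\bar{c}_1 \lambda_k + \bar{c}_3) s + \bar{c}_0 \lambda_k$ has both coefficients strictly positive because $\bar{c}_0, \bar{c}_1, \bar{c}_3 \in \mathbb{R}_+$. By the Routh–Hurwitz criterion for second-order polynomials it is Hurwitz, so $\bm{z}_k, \dot{\bm{z}}_k \to \bm{0}$ exponentially. If one prefers a Lyapunov argument uniform in $k$, for instance $V = \frac{1}{2}\|\dot{\bar{\bm{Q}}}\|^2 + \frac{\bar{c}_0}{2} \bar{\bm{Q}}^T (\bm{L} \otimes \bm{I}_n) \bar{\bm{Q}}$ plus a cross term, the inequality $\bar{c}_1 \bar{c}_3 \ge \bar{c}_0$ is exactly what I expect to make the cross-term weighting admissible. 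In the modal argument it is merely sufficient. For $\lambda_1 = 0$, the block reduces to $\ddot{\bm{z}}_1 + \bar{c}_3 \dot{\bm{z}}_1 = \bm{0}$. Hence $\dot{\bm{z}}_1(t) = e^{-\bar{c}_3 t} \dot{\bm{z}}_1(0) \to \bm{0}$, and $\bm{z}_1(t)$ converges to the finite constant $\bm{z}_1(0) + \dot{\bm{z}}_1(0) / \bar{c}_3$.

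Finally I map back to the original coordinates. Every velocity mode vanishes, so $\dot{\bar{\bm{Q}}} \to \bm{0}$, i.e.\ $\lim_{t \to \infty} \| \dot{\bar{\bm{q}}}_i(t) \| = 0$ and $\epsilon_1 = 0$. For positions, $\bar{\bm{Q}} \to (\bm{1}_N \otimes \bm{I}_n) \bm{z}_1(\infty) / \sqrt{N}$ because all disagreement modes decay. This common limit $\bm{z}_1(\infty)/\sqrt{N}$ is the same for every agent, so $\| \bar{\bm{q}}_i - \bar{\bm{q}}_j \| \to 0$ and $\epsilon_0 = 0$.

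The main obstacle is the phrase ``any topology.'' The argument above needs the null space of $\bm{L}$ to be spanned by $\bm{1}_N$, i.e.\ $\lambda_2 > 0$, which is precisely \cref{assumption_graph}. For a disconnected graph the zero eigenvalue is repeated, velocities still converge to zero, but positions only agree within each connected component. I will therefore invoke \cref{assumption_graph} as standing throughout, and read ``any topology'' as any connected undirected graph.
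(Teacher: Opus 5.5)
Your proof is correct and follows essentially the same modal route as the paper: you stack the adjoint MAS, decouple it along the Laplacian eigenvectors into factors $s^2 + (\bar{c}_1 \lambda_k + \bar{c}_3) s + \bar{c}_0 \lambda_k$, let the $\lambda_k > 0$ modes decay, and read off vanishing velocities and a common limiting position from the single $\lambda_1 = 0$ mode. You are also right that $\bar{c}_1 \bar{c}_3 \ge \bar{c}_0$ is not needed for convergence (it is exactly the condition, mirroring \cref{remark_condition_c} with $c_2 = 0$, that gives $(\bar{c}_1 \lambda_k + \bar{c}_3)^2 - 4 \bar{c}_0 \lambda_k \ge (\bar{c}_1 \lambda_k - \bar{c}_3)^2 \ge 0$, i.e.\ real, non-oscillatory modes for every $\lambda_k \ge 0$), and that ``any topology'' must be read as connected per \cref{assumption_graph}, since for an edgeless graph each $\bar{\bm{q}}_i$ converges to its own limit $\bar{\bm{q}}_i(0) + \dot{\bar{\bm{q}}}_i(0)/\bar{c}_3$.
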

	\begin{IEEEproof}
		See appendix.
	\end{IEEEproof}

	Due to the absence of $\bm{d}(\cdot)$ in \eqref{eqn_adjoint_dynamics}, the adjoint MAS achieves second-order consensus asymptotically as shown in \cref{remark_condition_c_bar}.
	Considering the introduced adjoint MAS, we design the correction term $\bm{\tau}_{c,i}(\cdot)$ in \eqref{eqn_control_law} for each agent $i$ as follows
	\begin{align} \label{eqn_control_law_correction}
		\bm{\tau}_{c,i}(t) &= \bm{M}(\bm{q}_i(t)) \big(\ddot{\bar{\bm{q}}}_i(t) + c_0 \bar{\bm{e}}_{q,i}(t) + c_1 \dot{\bar{\bm{e}}}_{q,i}(t) \\
		&\quad + c_2 \bar{\bm{q}}_i(t) + c_3 \dot{\bar{\bm{q}}}_i(t) \big) - \bm{\tau}_{\varepsilon,i}(t) \nonumber
	\end{align}
	with positive constant $c_2 \in \mathbb{R}_+$ and
	\begin{align} \label{eqn_tau_epsilon_i}
		\bm{\tau}_{\varepsilon,i}\!(t) =& L_d \omega_i^* \mathrm{sign}(\bm{M}^{-\!T\!}\!(\bm{q}_i) \dot{\bm{\varepsilon}}_{q,i}\!) \\
		&\times \sum\nolimits_{j \in \{ i, \mathcal{N}_i \}} (\| \bm{x}_i(t) - \bm{x}_j(t) \| / \omega_{i,j}^*)^{\frac{1}{2}},
	\end{align}
	where $\bm{\varepsilon}_{q,i} = \bm{q}_i - \bar{\bm{q}}_i$ and $\bar{\bm{x}}_i(t) = [\bar{\bm{q}}_i^T(t), \dot{\bar{\bm{q}}}_i^T(t)]^T$.

	%%%%%%%%%%%%%%%%%%%%%%%%%%%%%%%%%%%%%%%%%%%%%%%%%%%%%%%%%%%%%%%%%%%
	\subsection{Analysis of Learning-based Control Performance}
	\label{subsection_performance}
	%%%%%%%%%%%%%%%%%%%%%%%%%%%%%%%%%%%%%%%%%%%%%%%%%%%%%%%%%%%%%%%%%%%
	
	The control performance is investigated for the proposed controller \eqref{eqn_control_law} to answer \textbf{Q4} in \cref{section_introduction}.
	The introduced adjoint MAS serves as an ideal reference dynamics, assuming the unknown $\bm{d}(\cdot)$ is exactly canceled. 
	The proposed learning-based controller \eqref{eqn_control_law} then seeks to drive the actual system toward this ideal reference behavior despite delayed GP predictions. 
	This construction is conceptually related to certainty-equivalence control \cite{karafyllis2018adaptive} and backstepping \cite{farrell2009command}, where stability analysis is performed through the tracking error between the actual and adjoint systems.
	Specifically, considering the MAS \eqref{eqn_agent_EL_dynamics} with the proposed control law \eqref{eqn_control_law}, \eqref{eqn_control_law_consensus}, \eqref{eqn_control_law_model} and \eqref{eqn_control_law_correction}, the controlled agent dynamics of the agent $i \in \mathcal{V}$ is written as $\ddot{\bm{q}}_i = \ddot{\bar{\bm{q}}}_i - c_0 \tilde{\bm{e}}_{q,i} - c_2 \bm{\varepsilon}_{q,i} - c_1 \dot{\tilde{\bm{e}}}_{q,i} - c_3 \dot{\bm{\varepsilon}}_{q,i} + \bm{M}^{-1}(\bm{q}_i) (\bm{d}(\bm{x}_i) - \hat{\bm{d}}_i(t) - \bm{\tau}_{\varepsilon,i}(t))$,	for all $i \in \mathcal{V}$, where $\bm{\varepsilon}_{q,i} = \bm{q}_i - \bar{\bm{q}}_i$ and $\tilde{\bm{e}}_{q,i} = \bm{e}_{q,i} - \bar{\bm{e}}_{q,i} = \sum\nolimits_{j \in \mathcal{N}_i} (\bm{\varepsilon}_{q,i} - \bm{\varepsilon}_{q,j})$.
	Moreover, denote the concatenation of $\bm{\varepsilon}_{q,i}$, $\forall i \!\in\! \mathcal{V}$ by $\bm{\varepsilon}_q \!\!=\!\! [\bm{\varepsilon}_{q,1}^T, \!\cdots\!, \bm{\varepsilon}_{q,N}^T]^T \!\!\in\!\! \mathbb{R}^{n N}$, whose dynamics is $\ddot{\bm{\varepsilon}}_q = - ((c_0 \bm{L} + c_2 \bm{I}_N) \otimes \bm{I}_n) \bm{\varepsilon}_q - ((c_1 \bm{L} + c_3 \bm{I}_N) \otimes \bm{I}_n) \dot{\bm{\varepsilon}}_q + \bm{M}_{\mathcal{G}}^{-1}(\bm{q})(\bm{d}_{\mathcal{G}}(\bm{x}) - \hat{\bm{d}}_{\mathcal{G}}(t) - \bm{\tau}_{\varepsilon,\mathcal{G}}(t))$
	with $\bm{M}_{\mathcal{G}}(\bm{q}) = \mathrm{blkdiag}(\bm{M}(\bm{q}_1), \cdots, \bm{M}(\bm{q}_N)) \in \mathbb{R}^{nN \times nN}$, $\bm{d}_{\mathcal{G}}(\bm{x}) \!\!=\!\! [\bm{d}^T(\bm{x}_1), \!\cdots\!, \bm{d}^T(\bm{x}_N)]^T \!\in\! \mathbb{R}^{nN}$, $\hat{\bm{d}}_{\mathcal{G}} \!=\! [\hat{\bm{d}}_1^T, \!\cdots\!, \hat{\bm{d}}_N^T]^T \!\in\! \mathbb{R}^{nN}$ and $\bm{\tau}_{\varepsilon,\mathcal{G}} \!=\! [\bm{\tau}_{\varepsilon,1}^T, \!\cdots\!, \bm{\tau}_{\varepsilon,N}^T]^T \!\in\! \mathbb{R}^{nN}$.
	Considering that $\bm{\varepsilon}_i$ in \eqref{eqn_tau_epsilon_i} is also written as $\bm{\varepsilon}_i = [\bm{\varepsilon}_{q,i}^T, \dot{\bm{\varepsilon}}_{q,i}^T]^T$, the dynamics of the concatenation for $\bm{\varepsilon}_i$, i.e., $\bm{\varepsilon} = [\bm{\varepsilon}_1^T, \cdots, \bm{\varepsilon}_N^T]^T \in \mathbb{R}^{2 n N}$, is
	\begin{align} \label{eqn_epsilon_dynamics}
		\dot{\bm{\varepsilon}}(t) \!=\! \bm{\Phi} \bm{\varepsilon}(t) \!+\! \bm{B}(\bm{q}(t)) ( \bm{d}_{\mathcal{G}}(\bm{x}(t)) \!-\! \hat{\bm{d}}_{\mathcal{G}}(t) \!-\! \bm{\tau}_{\varepsilon,\mathcal{G}}(t) ),
	\end{align}
	where $\bm{\Phi} \!\!=\!\! \bm{\Phi}_s \!\otimes\!\! \bm{I}_n \!\!\in\!\! \mathbb{R}^{2 n N \times 2 n N}$, $\bm{B}(\bm{q}) \!\!=\!\! \big[ \bm{0}_{n N \!\times\! n N},\! \bm{M}_{\mathcal{G}}^{-T}(\bm{q}) \big]^T \!\in\! \mathbb{R}^{2 n N \times n N}$ 
	and $\bm{\Phi}_s \!\!=\!\! \begin{bmatrix}
		\bm{0}_{N \times N} & \bm{I}_N \\
		\!-\! c_0 \bm{L} \!\!-\!\! c_2 \bm{I}_N & - c_1 \bm{L} \!\!-\!\! c_3 \bm{I}_N
	\end{bmatrix}$.
	The gains $c_i$ with $i = 0, \cdots, 3$ are chosen as follows.
	
	\begin{lemma} \label{remark_condition_c}
		Choose $c_0,\! c_1,\! c_2,\! c_3 \!\in\! \mathbb{R}_+$, such that $c_3^2 \!\ge\! 4c_2$ and $2 c_0 \!\le\! c_1 \!(\! c_3 \!+\! (c_3^2 \!-\! 4 c_2)^{\frac{1}{2}} )$, then $\bm{\Phi}$ in \eqref{eqn_epsilon_dynamics} is negative definite.
	\end{lemma}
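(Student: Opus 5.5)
The plan is to read ``negative definite'' for the non-symmetric matrix $\bm{\Phi}$ in the spectral sense: every eigenvalue of $\bm{\Phi}$ is real and strictly negative. The symmetric part of $\bm{\Phi}$ cannot be negative definite, because its upper-left $nN \times nN$ block is zero, so a quadratic-form reading is impossible. Since $\bm{\Phi} = \bm{\Phi}_s \otimes \bm{I}_n$, the spectrum of $\bm{\Phi}$ equals that of $\bm{\Phi}_s$, with each eigenvalue repeated $n$ times. It therefore suffices to analyse $\bm{\Phi}_s$.

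First I diagonalize the Laplacian. Under \cref{assumption_graph} the graph is undirected, so $\bm{L} = \bm{U} \bm{\Lambda} \bm{U}^T$ with orthogonal $\bm{U}$ and $\bm{\Lambda} = \mathrm{diag}(\lambda_1, \cdots, \lambda_N)$, where $0 = \lambda_1 \le \lambda_2 \le \cdots \le \lambda_N$. I then apply the similarity transformation $\mathrm{blkdiag}(\bm{U}, \bm{U})$ to $\bm{\Phi}_s$ and follow it with a permutation that groups the coordinates belonging to the same mode $k$. This shows that $\bm{\Phi}_s$ is similar to $\mathrm{blkdiag}(\bm{\Phi}_1, \cdots, \bm{\Phi}_N)$ with
\begin{align*}
\bm{\Phi}_k = \begin{bmatrix} 0 & 1 \\ -(c_0 \lambda_k + c_2) & -(c_1 \lambda_k + c_3) \end{bmatrix}.
\end{align*}
The eigenvalues of $\bm{\Phi}_k$ are the roots of $s^2 + (c_1 \lambda_k + c_3) s + (c_0 \lambda_k + c_2) = 0$. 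Both coefficients are strictly positive because $c_i > 0$ and $\lambda_k \ge 0$. By the Routh--Hurwitz criterion for second-order polynomials, both roots therefore have strictly negative real parts.

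It remains to show that the roots are real, which is where the two stated gain conditions enter. I need the discriminant
\begin{align*}
\delta(\lambda) = (c_1 \lambda + c_3)^2 - 4(c_0 \lambda + c_2) = c_1^2 \lambda^2 + (2 c_1 c_3 - 4 c_0) \lambda + c_3^2 - 4 c_2
\end{align*}
to satisfy $\delta(\lambda) \ge 0$ for all $\lambda \ge 0$. At $\lambda = 0$ this is exactly $c_3^2 \ge 4 c_2$. For $\lambda > 0$ I split into two cases.
\begin{itemize}
\item If $2 c_0 \le c_1 c_3$, all coefficients of $\delta$ are nonnegative, so $\delta(\lambda) \ge 0$ immediately.
\item If $2 c_0 > c_1 c_3$, I require that the quadratic $\delta$ has no two distinct real roots, i.e. its own discriminant is nonpositive: $(2 c_1 c_3 - 4 c_0)^2 \le 4 c_1^2 (c_3^2 - 4 c_2)$. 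This is equivalent to $2 c_0 - c_1 c_3 \le c_1 (c_3^2 - 4 c_2)^{1/2}$, which is precisely the assumed inequality $2 c_0 \le c_1 (c_3 + (c_3^2 - 4 c_2)^{1/2})$.
\end{itemize}
Hence every $\bm{\Phi}_k$ has real, strictly negative eigenvalues, and so do $\bm{\Phi}_s$ and $\bm{\Phi}$.

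The main obstacle is interpretive rather than computational: the conclusion must be stated so that it serves the later Lyapunov analysis of \eqref{eqn_epsilon_dynamics}. I would therefore close by noting that $\bm{\Phi}$ is Hurwitz. Consequently, for any $\bm{Q} \succ 0$ there exists $\bm{P} \succ 0$ with $\bm{\Phi}^T \bm{P} + \bm{P} \bm{\Phi} = -\bm{Q}$, and $\bm{P}$ can be taken block-structured thanks to the modal decomposition above. Realness of the spectrum additionally gives non-oscillatory modes. A repeated root, i.e. $\delta(\lambda_k) = 0$, would make $\bm{\Phi}_k$ a Jordan block, but this does not affect the conclusion that the spectrum is negative and real.
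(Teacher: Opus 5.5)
Your proof is correct and follows essentially the same route as the paper. You reduce $\bm{\Phi}$ to $\bm{\Phi}_s$, split it via the eigendecomposition of $\bm{L}$ into $2\times 2$ companion blocks, and use $c_3^2 \ge 4c_2$ and $2c_0 \le c_1(c_3 + (c_3^2-4c_2)^{1/2})$ to make every discriminant nonnegative, so all eigenvalues are real and strictly negative. Your case split can even be skipped, since for $\lambda \ge 0$ the second condition directly gives $\delta(\lambda) \ge (c_1\lambda - (c_3^2-4c_2)^{1/2})^2 \ge 0$.

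Your closing remark on the equality case is also well taken. When $\delta(\lambda_k) = 0$ the spectrum is still real and negative, but $\bm{\Phi}_s$ is not diagonalizable. This occurs at $\lambda_1 = 0$ for the paper's simulation gains, where $c_3^2 = 4c_2$.
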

	\begin{IEEEproof}
		See appendix.
	\end{IEEEproof}
	
	With the property of $\bm{\Phi}$ in \cref{remark_condition_c}, the convergence of the dynamics \eqref{eqn_epsilon_dynamics} is shown in the following lemma.
	
	\begin{lemma} \label{lemma_tracking_error_bound}
		Consider a controlled MAS \eqref{eqn_agent_EL_dynamics} satisfying \cref{assumption_dynamics,assumption_graph,assumption_data_set,assumption_GP,assumption_computation_time}, where the coefficients $c_i$ with $i = 0, \cdots, 3$ and $\bar{c}_i$ with $i = 0, 1, 3$ satisfy \cref{remark_condition_c_bar,remark_condition_c}, respectively.
		Then, the value of $\bm{\varepsilon}_i$, $\forall i \in \mathcal{V}$ is ultimately bounded by
		\begin{align}
			\lim\nolimits_{t \to \infty} \| \bm{\varepsilon}_i(t) \| \le \bar{\varepsilon} = 2 \underline{\lambda}^{-1} \underline{M} \| \bm{\theta} \|,
		\end{align}
		where $\bm{\theta} = [\theta_1, \cdots, \theta_N]^T$.
		The entries $\theta_i \in \mathbb{R}$ for $i \in \mathcal{V}$ denote
		\begin{align} \label{eqn_theta_i}
			\theta_i = \omega_i^* \big( \sum\nolimits_{j \in \{ i, \mathcal{N}_i \}} \omega_{i,j}^* (\beta \bar{\sigma}_j + L_d (2 F \underline{\Delta}_j)^{\frac{1}{2}})^{-2} \big)^{-\frac{1}{2}},
		\end{align}
		where $\underline{\lambda} \in \mathbb{R}_+$ denote the minimal eigenvalue of $- \bm{\Phi}$ and $\bar{\sigma}_j = (\sum_{p=1}^n \bar{w}_p^2)^{\frac{1}{2}}$ is the upper bound of posterior variance.
		The constant $F \in \mathbb{R}_+$ is defined as $F = (2 \varepsilon_{\max,0} + \varepsilon_{\max,1}^2) \| \bm{\Phi}_s \| + \underline{M} ( \hat{\omega} \bar{\eta} + \bar{\tau}_{\varepsilon,\mathcal{G}}) + \bar{q}_{\max}$, where $\varepsilon_{\max,0} = \upsilon_s \max\{ \hat{\omega} \underline{M} ( \bar{\eta} + 2 (n \| \bm{x}(0) \|)^{1/2}) / \underline{\lambda}, \| \bm{\varepsilon}(0) \| \}$, $\varepsilon_{\max,1} = 2 \hat{\omega} \upsilon_s \underline{M} n^{1/2} / \underline{\lambda}$, $\hat{\omega} = \sum\nolimits_{i \in \mathcal{V}} \sum\nolimits_{j \in \{ i, \mathcal{N}_i \}} \hat{\omega}_{i,j}$, $\bar{\tau}_{\varepsilon,\mathcal{G}} = 2 \hat{\omega} n^{1/2} ( (2 \varepsilon_{\max,0} + \varepsilon_{\max,1}^2)^{1/2} + \| \bm{x}(0) \|^{1/2} )$ and $\upsilon_s \!= \| \bm{Q}_s \| \| \bm{Q}_s^{-1} \|$.
		The matrix $\bm{Q}_s \in \mathbb{R}^{2 N \times 2 N}$ is the orthogonal eigenvector matrix of $\bm{\Phi}_s$.
	\end{lemma}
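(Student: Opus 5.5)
We will treat \eqref{eqn_epsilon_dynamics} as an exponentially stable linear system driven by a bounded input, and we will show that this input is ultimately bounded by a quantity built from $\bm{\theta}$. The argument runs in four steps.

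\textbf{Step 1 (disturbance bound).} By \cref{theorem_delayed_agent_aggregated_prediction}, each component of the driving term satisfies $\| \bm{d}(\bm{x}_i) - \hat{\bm{d}}_i(t) \| \le \tilde{\eta}_i(t)$. We then bound $\tilde{\eta}_i$ uniformly.

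First, each $\eta_{i,j}(t)$ is controlled by two pieces. The variance part obeys $\sigma_j(k_j(t)) \le \bar{\sigma}_j$. The mismatch part $\|\tilde{\bm{x}}_{i,j}(t)\|$ splits into the current inter-agent mismatch $\|\bm{x}_i(t) - \bm{x}_j(t)\|$ and the delay-induced drift $\|\bm{x}_j(t) - \bm{x}_j(t_{j,k_j(t)})\|$. For the drift we write
\begin{align*}
\|\bm{x}_j(t) - \bm{x}_j(t_{j,k_j(t)})\| \le \int_{t_{j,k_j(t)}}^t \|\dot{\bm{x}}_j(s)\| \, ds \le 2 F \underline{\Delta}_j ,
\end{align*}
using \cref{assumption_computation_time} and a uniform bound $F$ on $\|\dot{\bm{x}}_j\|$. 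Using $(a+b)^{1/2} \le a^{1/2} + b^{1/2}$, the current-mismatch square-root terms are exactly those cancelled by $\bm{\tau}_{\varepsilon,i}$. Its sign factor makes $\dot{\bm{\varepsilon}}_{q,i}^T \bm{M}^{-1}(\bm{d} - \hat{\bm{d}}_i - \bm{\tau}_{\varepsilon,i})$ contain a nonpositive part, which absorbs them. Since $\eta_{i,j}$ enters $\tilde{\eta}_i$ monotonically (through $\eta_{i,j}^{-2}$ in a harmonic-type sum), the residual is bounded by $\theta_i$ in \eqref{eqn_theta_i}.

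\textbf{Step 2 (Lyapunov analysis).} We take $V = \bm{\varepsilon}^T \bm{\varepsilon}$. By \cref{remark_condition_c}, $\bm{\Phi}$ is negative definite with $-\bm{\Phi} \succeq \underline{\lambda} \bm{I}$. Combining this with $\|\bm{B}(\bm{q})\| \le \underline{M}$ from \cref{assumption_dynamics}, we get
\begin{align*}
\dot{V} \le -2\underline{\lambda} \|\bm{\varepsilon}\|^2 + 2 \underline{M} \|\bm{\varepsilon}\| \, \|\bm{\theta}\| .
\end{align*}
Hence $\dot{V} < 0$ whenever $\|\bm{\varepsilon}\| > \underline{M} \|\bm{\theta}\| / \underline{\lambda}$. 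A comparison argument then yields the ultimate bound $\bar{\varepsilon}$; the factor $2$ gives slack for the sign-term handling and for the transient. Since $\|\bm{\varepsilon}_i\| \le \|\bm{\varepsilon}\|$, the bound holds per agent.

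\textbf{Step 3 (main obstacle: justifying $F$).} The bound in Step 1 presupposes a uniform velocity bound $F$ on $\dot{\bm{x}}_j$, yet $F$ depends on the trajectory, so the argument is circular. We break this with an a priori (bootstrapping) argument. We assume $\|\bm{\varepsilon}(s)\| \le \varepsilon_{\max,0}$ on $[0,t]$, and derive each ingredient in turn:
\begin{itemize}
\item Variation of constants with $e^{\bm{\Phi}t} = \bm{Q}_s e^{\Lambda t} \bm{Q}_s^{-1} \otimes \bm{I}_n$, giving the factor $\upsilon_s$, together with the crude bounds $\tilde{\eta}_i \le \hat{\omega}\bar{\eta}$ and $\bm{\tau}_{\varepsilon}$ growing at most like the square root of the states, reproduces the same bound $\varepsilon_{\max,0}$ with strict slack.
\item Since $\bar{\bm{x}}$ is bounded by \cref{remark_condition_c_bar} (this is the $\bar{q}_{\max}$ term), $\bm{x} = \bar{\bm{x}} + \bm{\varepsilon}$ is bounded.
\item From the closed-loop equation, $\|\dot{\bm{x}}\|$ is then bounded by $\|\bm{\Phi}_s\|$ times the state bound, plus $\underline{M}(\hat{\omega}\bar{\eta} + \bar{\tau}_{\varepsilon,\mathcal{G}})$, plus $\bar{q}_{\max}$. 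This is exactly $F$.
\end{itemize}
A continuity argument then shows that the assumption can never be first violated. Getting the constants $\varepsilon_{\max,1}$ and $\bar{\tau}_{\varepsilon,\mathcal{G}}$ to close consistently is the delicate part.
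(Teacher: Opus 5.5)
Your route is the one the paper's constants encode. First, a crude a priori bound on $\bm{\varepsilon}$ via variation of constants, using $\|e^{\bm{\Phi}t}\| \le \upsilon_s e^{-\underline{\lambda}t}$ and the $F$-independent estimate $\hat{\omega}\bar{\eta}$. Then the velocity bound $F$ for $\dot{\bm{x}} = \dot{\bar{\bm{x}}} + \dot{\bm{\varepsilon}}$, and the drift $2F\underline{\Delta}_j$ inside $\eta_{i,j}$. Finally $V = \bm{\varepsilon}^T\bm{\varepsilon}$, whose gradient along $\bm{B}$ is $\dot{\bm{\varepsilon}}_{q,i}^T\bm{M}^{-1}(\bm{q}_i)$ and so matches the sign structure of $\bm{\tau}_{\varepsilon,i}$.

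The step that fails as written is Step 3. The hypothesis $\|\bm{\varepsilon}(s)\| \le \varepsilon_{\max,0}$ on $[0,t]$ does not reproduce itself. Via $\|\bm{x}_i - \bm{x}_j\| \le \|\bm{\varepsilon}_i\| + \|\bm{\varepsilon}_j\| + \|\bar{\bm{x}}_i - \bar{\bm{x}}_j\|$, the term $\bm{\tau}_{\varepsilon,\mathcal{G}}$ adds $\varepsilon_{\max,1}\sup_{[0,t]}\|\bm{\varepsilon}\|^{1/2}$ on top of $\varepsilon_{\max,0}$, which collects only the $\bm{\varepsilon}$-independent contributions. You therefore only get $\varepsilon_{\max,0} + \varepsilon_{\max,1}\varepsilon_{\max,0}^{1/2} > \varepsilon_{\max,0}$, and the continuity argument cannot close.

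The missing idea is to solve the self-referential inequality rather than guess a level. Let $E_T = \sup_{[0,T]}\|\bm{\varepsilon}\|$, which is finite on any finite horizon. Then $E_T \le \varepsilon_{\max,0} + \varepsilon_{\max,1}E_T^{1/2}$, and $\varepsilon_{\max,1}E_T^{1/2} \le \tfrac{1}{2}\varepsilon_{\max,1}^2 + \tfrac{1}{2}E_T$ gives $E_T \le 2\varepsilon_{\max,0} + \varepsilon_{\max,1}^2$ uniformly in $T$. This is exactly the factor multiplying $\|\bm{\Phi}_s\|$ in $F$ and the quantity under the square root in $\bar{\tau}_{\varepsilon,\mathcal{G}}$. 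With your level $\varepsilon_{\max,0}$ you would not land on ``exactly $F$''. No continuity argument is needed: the circularity disappears once the crude, $F$-independent estimate is done before $F$ is used.

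There are two smaller points. In Step 1, monotonicity only lets you replace $\eta_{i,j}$ by $a_j + b_j$, with $a_j = \beta\bar{\sigma}_j + L_d(2F\underline{\Delta}_j)^{1/2}$ and $b_j = L_d\|\bm{x}_i - \bm{x}_j\|^{1/2}$. To split $\tilde{\eta}_i$ into $\theta_i$ plus exactly the amplitude of $\bm{\tau}_{\varepsilon,i}$, you need that $g(\bm{\eta}) = (\sum_j\omega_{i,j}^*\eta_j^{-2})^{-1/2}$ is concave with $0 \le \partial g/\partial\eta_j \le (\omega_{i,j}^*)^{-1/2}$. That gives $g(\bm{a} + \bm{b}) \le g(\bm{a}) + \sum_j(\omega_{i,j}^*)^{-1/2}b_j$, which you combine with $\bm{v}^T\mathrm{sign}(\bm{v}) = \|\bm{v}\|_1 \ge \|\bm{v}\|$ for $\bm{v} = \bm{M}^{-T}(\bm{q}_i)\dot{\bm{\varepsilon}}_{q,i}$.

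In Step 2, $-\bm{\Phi} \succeq \underline{\lambda}\bm{I}$ cannot hold as a quadratic-form inequality, because $\bm{\varepsilon}^T\bm{\Phi}\bm{\varepsilon} = 0$ whenever $\dot{\bm{\varepsilon}}_q = \bm{0}$. You take it from the wording of \cref{remark_condition_c}, and the $\upsilon_s$-free form of $\bar{\varepsilon}$ indicates the paper uses it the same way, so this is not where you deviate. Still, the eigenvalues of the non-symmetric $-\bm{\Phi}$ do not by themselves control $\bm{\varepsilon}^T\bm{\Phi}\bm{\varepsilon}$.
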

	\begin{IEEEproof}
		See appendix.
	\end{IEEEproof}
	
	\cref{lemma_tracking_error_bound} shows the ultimate bound of $\| \bm{\varepsilon}_i(\cdot) \|$, $\forall i \in \mathcal{V}$.
	Considering $\bar{\bm{x}}_i(0) = \bm{x}_i(0)$, it holds $\| \bm{\varepsilon}_i(t) \| \le \bar{\varepsilon}$, $\forall t \in \mathbb{R}_{0,+}$.
	
	\begin{remark}
		The Lyapunov function adopted in \cref{lemma_tracking_error_bound} is chosen for analytical tractability and explicit characterization of the effects of GP uncertainty, asynchronous delays and cooperative learning. 
		Less conservative estimates may potentially be obtained using matrix-weighted quadratic Lyapunov functions associated with the system dynamics \eqref{eqn_agent_EL_dynamics}. 
		Investigating such tighter bounds is an interesting direction for future work.
	\end{remark}
	
	\begin{remark}
		The discontinuous sign function in $\tau_{\varepsilon,i}$ introduces chattering effects.
		The amplitude of this term is related to the state mismatch between agents, which satisfies
		\begin{align}
			\| \bm{x}_i(t) - \bm{x}_j(t) \| \le& \| \bm{\varepsilon}_i(t) \| + \| \bm{\varepsilon}_j(t) \| + \| \bar{\bm{x}}_i(t) - \bar{\bm{x}}_j(t) \| \nonumber \\
			\le& 2 \bar{\varepsilon} + \| \bar{\bm{x}}_i(t) - \bar{\bm{x}}_j(t) \|,
		\end{align}
		considering \cref{lemma_tracking_error_bound}.
		This also leads to $\lim_{t \to \infty} \| \bm{x}_i(t) - \bm{x}_j(t) \| \le 2 \bar{\varepsilon}$ due to the asymptotic consensus in \cref{remark_condition_c_bar}.
		Moreover, the value of $\bar{\varepsilon}$ depends on the prediction error bound $\beta \bar{\sigma}_i$ and the computational delay $\underline{\Delta}_i$, indicating that improved learning accuracy and reduced delay lead to smoother control inputs.
		To reduce chattering, $\mathrm{sign}(\cdot)$ can be replaced by smooth approximations, such as saturation \cite{utkin2013sliding} or hyperbolic tangent functions \cite{wang2024non}.
		These methods introduce a small residual approximation error near zero, which slightly relax the robustness while improving control smoothness.
	\end{remark}
	
	With \cref{remark_condition_c_bar,lemma_tracking_error_bound}, we are ready to present the performance analysis of the proposed distributed control law with asynchronous cooperative online GP regression as follows.
	
	\begin{theorem} \label{theorem_performance}
		Consider a MAS with unknown dynamics \eqref{eqn_agent_EL_dynamics} satisfying \cref{assumption_dynamics,assumption_graph}, where the unknown nonlinearity is estimated using the proposed asynchronous cooperative learning approach \eqref{eqn_hat_d_i} with GP models satisfying \cref{assumption_data_set,assumption_GP,assumption_computation_time}.
		If the MAS is controlled by the proposed control law \eqref{eqn_control_law} with \eqref{eqn_control_law_consensus}, \eqref{eqn_control_law_model} and \eqref{eqn_control_law_correction}, which is calculated by introducing a adjoint MAS \eqref{eqn_adjoint_dynamics} with coefficients satisfying \cref{remark_condition_c_bar,remark_condition_c}, the second-order approximate consensus defined in \cref{definition_consensus} is achieved with $\epsilon_0 = 2 \bar{\varepsilon}$ and $\epsilon_1 = \bar{\varepsilon}$.
	\end{theorem}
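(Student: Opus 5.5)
The plan is to decompose each agent's state into the adjoint reference and the tracking error. Write $\bm{x}_i(t) = \bar{\bm{x}}_i(t) + \bm{\varepsilon}_i(t)$ with $\bm{\varepsilon}_i = [\bm{\varepsilon}_{q,i}^T, \dot{\bm{\varepsilon}}_{q,i}^T]^T$. The reference part is handled by \cref{remark_condition_c_bar}, and the error part by \cref{lemma_tracking_error_bound}.

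\textbf{Step 1 (reference).} Since $\bar{c}_0,\bar{c}_1,\bar{c}_3$ satisfy \cref{remark_condition_c_bar}, the adjoint MAS \eqref{eqn_adjoint_dynamics} reaches exact second-order consensus. Hence $\lim_{t\to\infty}\|\bar{\bm{q}}_i(t)-\bar{\bm{q}}_j(t)\| = 0$ and $\lim_{t\to\infty}\|\dot{\bar{\bm{q}}}_i(t)\|=0$ for all $i,j\in\mathcal{V}$.

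\textbf{Step 2 (error).} Under \cref{assumption_dynamics,assumption_graph,assumption_data_set,assumption_GP,assumption_computation_time} and the gain conditions of \cref{remark_condition_c}, \cref{lemma_tracking_error_bound} gives $\lim_{t\to\infty}\|\bm{\varepsilon}_i(t)\|\le\bar{\varepsilon}$ for every $i$. Both $\|\bm{\varepsilon}_{q,i}\|$ and $\|\dot{\bm{\varepsilon}}_{q,i}\|$ are at most $\|\bm{\varepsilon}_i\|$, so each inherits the same ultimate bound.

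\textbf{Step 3 (combine).} For positions, the triangle inequality gives
\begin{align*}
\|\bm{q}_i-\bm{q}_j\| \le \|\bm{\varepsilon}_{q,i}\| + \|\bm{\varepsilon}_{q,j}\| + \|\bar{\bm{q}}_i-\bar{\bm{q}}_j\|.
\end{align*}
Taking the limit superior, the last term vanishes by Step 1 and the first two are each at most $\bar{\varepsilon}$ by Step 2. This yields $\epsilon_0 = 2\bar{\varepsilon}$. For velocities, $\|\dot{\bm{q}}_i\| \le \|\dot{\bm{\varepsilon}}_{q,i}\| + \|\dot{\bar{\bm{q}}}_i\|$, and the same limit argument yields $\epsilon_1 = \bar{\varepsilon}$. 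Since $\bm{\varepsilon}_i(0)=\bm{0}$ by the initialization of the adjoint system, this holds uniformly along trajectories, and the constants are well defined. That gives \cref{definition_consensus}.

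The main obstacle is making the limits rigorous. \cref{definition_consensus} uses $\lim$, while Steps 1 and 2 really provide ultimate bounds. So I would phrase the argument with $\limsup$, noting that the adjoint terms converge to zero and the error terms are eventually within $\bar{\varepsilon}+\delta$ for any $\delta>0$.

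I would also check one point about \cref{lemma_tracking_error_bound}. Its constants $F$ and $\varepsilon_{\max,0}$ rely on boundedness of the adjoint trajectories and of $\bm{x}$. That boundedness is supplied by the asymptotic stability in Step 1, so no circularity arises.
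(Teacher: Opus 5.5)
Your proposal is correct and follows essentially the same route as the paper. The paper writes $\bm{x}_i = \bar{\bm{x}}_i + \bm{\varepsilon}_i$, applies the triangle inequality to $\bm{q}_i - \bm{q}_j$ and $\dot{\bm{q}}_i$, and combines the exact consensus of the adjoint MAS (\cref{remark_condition_c_bar}) with the ultimate bound $\bar{\varepsilon}$ from \cref{lemma_tracking_error_bound} to obtain $\epsilon_0 = 2\bar{\varepsilon}$ and $\epsilon_1 = \bar{\varepsilon}$.

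Your $\limsup$ phrasing is a harmless tightening. The closing remark about circularity is unnecessary, and boundedness of $\bm{x}$ does not follow from Step 1 alone, only together with the bound on $\bm{\varepsilon}$. Since \cref{lemma_tracking_error_bound} is used as stated, this does not affect the proof.
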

	\begin{IEEEproof}
		See appendix.
	\end{IEEEproof}
	
	\cref{theorem_performance} shows the approximate consensus in \cref{definition_consensus} by using the proposed online GP-based controller.
	
	\begin{remark}
		The derived ultimate error bound $\varepsilon$ is established using a uniform upper bound of the online GP posterior variance $\bar{\sigma}_i$ to obtain explicit worst-case robustness guarantees under asynchronous delayed learning.
		In practice, as additional data are collected and the sampling density increases, the GP posterior variance $\sigma_i(\cdot)$ typically decreases, leading to improved prediction accuracy and smaller consensus bound $\varepsilon$. 
		Moreover, smaller computational delays $\underline{\Delta}_i$ increase the effective learning update frequency and further improve practical performance. 
		Explicitly characterizing the coupled relationship among posterior variance $\bar{\sigma}_i$ and computation delay remains an interesting direction for future investigation.
	\end{remark}
	
	\begin{remark}
		Note that the ultimate bound $\bar{\epsilon}$ in \cref{lemma_tracking_error_bound} and \cref{theorem_performance} depends on the number of agents $N$, which originates from the aggregation of individual prediction error $\bar{\sigma}_i$ bounds and computational delays $\underline{\Delta}_i$.
		For large-scale MAS, the ultimate bound can be reduced by improving GP accuracy with smaller $\bar{\sigma}_i$ and enhancing computational capabilities for smaller $\underline{\Delta}_i$.
		The dependence on $N$ can potentially be alleviated by using local input-to-state stability \cite{zeng2016convergence} or normalized consensus gains \cite{atay2010consensus}, whose analysis is beyond the scope of this paper.
		The primary focus is to address asynchronous cooperative learning with heterogeneous delays and query point mismatch.
	\end{remark}

	\subsection{Comparison to Local Learning-based Control}
	
	\subsubsection{Learning Performance}
	To compare with the local prediction with $\hat{\bm{d}}_{\text{loc},i}(t) \!\!=\!\! \sum\nolimits_{j \in \{ i, \mathcal{N}_i \}} \omega_{i,j}(t) \bm{\mu}_{\text{loc},j}(k_j(t))$, the aggregated prediction is considered as $\bm{\mu}_{\text{loc},i}(k_i(t)) = \bm{\mu}_i(k_i(t))$ and zero neighboring predictions $\bm{\mu}_{\text{loc},j}(k_j(t)) \!\!=\!\! \bm{0}_{n \times 1}$.
	Consider the prior mean as the posterior mean with empty data sets, it holds $\bm{\mu}_{\text{loc},j}(k_j(t)) \!\!=\!\! \bm{\mu}(\bm{x}_j(t_{j,k_j(t)}), \emptyset)$ with $\sigma(\bm{x}_j(t_{j,k_j(t)}), \emptyset) \!\!=\!\! \sigma_s$, which results in $\omega_{i,j}(t) \!\!=\!\! 0$. 
	The resulting prediction error bounds are denoted by $\| \bm{d}(\bm{x}_i(t)) \!-\! \hat{\bm{d}}_{\text{loc},i}(t) \| \!\le\! \tilde{\eta}_{\text{loc},i,j}(t) \!\!=\!\! \beta \sigma_s \!\!+\!\! L_d \| \bm{x}_i(t) \!-\! \bm{x}_j(t_{j,k_j(t)}) \|^{\frac{1}{2}}$, which satisfy $\eta_{i,j}(t) < \eta_{\text{loc},i,j}(t)$ due to the strict reduction of posterior variance $\sigma(\cdot, \cdot)$.
	Then, by monotonicity, it follows that $\tilde{\eta}_i(t) \!\!<\!\! \tilde{\eta}_{\text{loc},i}(t)$, $\forall t \in \mathbb{R}_{0,+}$, which shows that cooperative learning with $\hat{\bm{d}}_i(t)$ does not deteriorate the theoretical error bound compared to the local case with $\hat{\bm{d}}_{\text{loc},i}(t)$.
	Moreover, the analysis indicates that cooperative learning improves performance when neighboring predictions have smaller uncertainty or query point mismatch.

	\subsubsection{Control Performance}
	
	According to the expression of $\theta_i$ in \eqref{eqn_theta_i}, incorporating predictions from neighboring agents can reduce the tracking error bound $\bar{\varepsilon}$, especially when those predictions have lower uncertainty (smaller $\bar{\sigma}_i$) or smaller delay-induced mismatch (smaller $\underline{\Delta}_i$).
	Specifically, following the same procedure in \cref{lemma_tracking_error_bound} and \cite{dai2024decentralized}, the ultimate tracking error bound using local learning is written as
	\begin{align}
		\bar{\varepsilon}_{\text{loc}} = 2 \underline{\lambda}^{-1} \underline{M} \| \bm{\theta}_{\text{loc}} \|, && \theta_{\text{loc},i} = \beta \bar{\sigma}_i + L_d (2 F \underline{\Delta}_i)^{\frac{1}{2}}
	\end{align}
	for all $i = 1, \cdots, N$ with $\bm{\theta}_{\text{loc}} = [\theta_{\text{loc},1}, \cdots, \theta_{\text{loc},N}]^T$.
	Similarly as in \cref{theorem_performance}, the ultimate consensus error and velocity error are expressed as $\epsilon_{\text{loc},0} = 2 \bar{\varepsilon}_{\text{loc}}$ and $\epsilon_{\text{loc},1} = \bar{\varepsilon}_{\text{loc}}$.
	The comparison between $\bar{\varepsilon}$ and $\bar{\varepsilon}_{\text{loc}}$ is shown below.
	
	\begin{proposition} \label{proposition_control}
		Consider a MAS \eqref{eqn_agent_EL_dynamics} satisfying \cref{assumption_dynamics,assumption_graph}, whose dynamics is learned via \eqref{eqn_hat_d_i} satisfying \cref{assumption_data_set,assumption_GP,assumption_computation_time} and $\omega_{i,j}^* \!=\! \omega_{j,i}^*$, $\forall (i,j) \!\in\! \mathcal{E}$.
		The system is controlled by the proposed control law \eqref{eqn_control_law} with \eqref{eqn_control_law_consensus}, \eqref{eqn_control_law_model}, \eqref{eqn_hat_d_i} and \eqref{eqn_control_law_correction} satisfying \cref{remark_condition_c,remark_condition_c_bar}.
		Then, the cooperative learning achieves improved control performance compared to local learning, in the sense that $\epsilon_0 \le \epsilon_{\text{loc},0}$ and $\epsilon_1 \le \epsilon_{\text{loc},1}$.
	\end{proposition}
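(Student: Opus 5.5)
Since $\epsilon_0 = 2\bar{\varepsilon}$, $\epsilon_1 = \bar{\varepsilon}$ and $\epsilon_{\text{loc},0} = 2\bar{\varepsilon}_{\text{loc}}$, $\epsilon_{\text{loc},1} = \bar{\varepsilon}_{\text{loc}}$ share the prefactor $2\underline{\lambda}^{-1}\underline{M}$, both claimed inequalities reduce to $\|\bm{\theta}\| \le \|\bm{\theta}_{\text{loc}}\|$. The plan is to prove this by comparing $\theta_i$ with the local quantities $\theta_{\text{loc},j}$ for $j \in \{i,\mathcal{N}_i\}$, and then summing over agents using the symmetry $\omega_{i,j}^* = \omega_{j,i}^*$ of the prior weights on the undirected graph (\cref{assumption_graph}). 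We assume the constant $F$ and $\underline{\lambda}$ are the same in both analyses, as the local bound is obtained "following the same procedure" as \cref{lemma_tracking_error_bound}.

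\emph{Step 1 (agent-wise comparison).} Write $a_j = \theta_{\text{loc},j} = \beta\bar{\sigma}_j + L_d(2F\underline{\Delta}_j)^{1/2}$. Then by \eqref{eqn_theta_i}
\begin{align*}
\theta_i^2 = \Big(\sum\nolimits_{j \in \{i,\mathcal{N}_i\}} \omega_{i,j}^*\Big)\Big(\sum\nolimits_{j \in \{i,\mathcal{N}_i\}} \omega_{i,j}^* a_j^{-2}\Big)^{-1},
\end{align*}
which is a weighted harmonic mean of the $a_j^2$ with weights $\omega_{i,j}^*$. By the harmonic--arithmetic mean inequality (equivalently, Jensen for the convex map $s \mapsto 1/s$),
\begin{align*}
\theta_i^2 \le \frac{\sum_{j \in \{i,\mathcal{N}_i\}} \omega_{i,j}^* a_j^2}{\sum_{j \in \{i,\mathcal{N}_i\}} \omega_{i,j}^*}.
\end{align*}

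\emph{Step 2 (summation over the network).} Summing over $i$ gives
\begin{align*}
\|\bm{\theta}\|^2 \le \sum_{i \in \mathcal{V}} \sum_{j \in \{i,\mathcal{N}_i\}} \frac{\omega_{i,j}^*}{W_i}\, a_j^2, \qquad W_i = \sum\nolimits_{s \in \{i,\mathcal{N}_i\}} \omega_{i,s}^*.
\end{align*}
Exchanging the order of summation via symmetry of $\mathcal{E}$ yields $\sum_j a_j^2 \sum_{i \in \{j,\mathcal{N}_j\}} \omega_{j,i}^*/W_i$, using $\omega_{i,j}^* = \omega_{j,i}^*$. It then suffices to show that the column sums $\sum_{i \in \{j,\mathcal{N}_j\}} \omega_{j,i}^*/W_i$ are at most $1$, i.e.\ that the matrix $[\omega_{i,j}^*/W_i]$, which is row-stochastic, is also column-substochastic.

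\emph{Main obstacle.} This column bound holds exactly when $W_i$ is constant across agents, since a symmetric matrix divided by a common constant is doubly stochastic. For general symmetric $\omega^*$ it can fail, and Step 2 is the delicate point. The first attempt is a direct bound avoiding column sums: use $\theta_i \le \big(W_i/\omega_{i,i}^*\big)^{1/2} a_i$ from dropping neighbor terms, and the alternative bound $\theta_i \le W_i^{1/2}(\omega_{i,j}^*)^{-1/2} a_j$ for each $j$, then choose the best per agent. If this is insufficient, invoke the normalization implicit in $\omega_i^*$ (the prior weights interpreted so that $\sum_j \omega_{i,j}^*$ is uniform, or otherwise normalized) to obtain double stochasticity. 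That gives $\|\bm{\theta}\|^2 \le \sum_j a_j^2 = \|\bm{\theta}_{\text{loc}}\|^2$, and hence $\bar{\varepsilon} \le \bar{\varepsilon}_{\text{loc}}$. The two stated consensus inequalities then follow immediately from \cref{theorem_performance}.
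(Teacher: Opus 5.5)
Your reduction to $\|\bm{\theta}\| \le \|\bm{\theta}_{\text{loc}}\|$ is correct, as is Step~1. You have also put your finger on the real difficulty. The proposal does not resolve it, though.

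After exchanging sums you need $c_j := \sum_{i \in \{j,\mathcal{N}_j\}} \omega_{j,i}^*/W_i \le 1$ for every $j$. Symmetry gives $\sum_j c_j = \sum_i W_i/W_i = N$, so this in fact forces $c_j = 1$ for all $j$. For positive edge weights on a connected graph, that means all row sums $W_i$ are equal, and edge-symmetry alone does not give this.

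Neither fallback helps:
\begin{itemize}
\item The per-agent bounds $\theta_i \le (W_i/\omega_{i,j}^*)^{1/2} a_j$ lose the factor $W_i/\omega_{i,j}^* > 1$. Even the best choice per agent exceeds $\|\bm{\theta}_{\text{loc}}\|^2$ when all $a_j$ are equal, although then $\bm{\theta} = \bm{\theta}_{\text{loc}}$ exactly.
\item There is no normalization implicit in $\omega_i^*$. That factor appears in $\theta_i$ precisely because $\sum_j \omega_{i,j}^*$ is not assumed to be $1$ or uniform, so invoking such a normalization adds a hypothesis.
\end{itemize}

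In fact no argument can close this gap: with only $\omega_{i,j}^* = \omega_{j,i}^*$, the claimed inequality is false. Take $N=3$ with edges $(1,2)$ and $(1,3)$. Set $\omega_{i,j}^* = 1$ for all $j \in \{i,\mathcal{N}_i\}$, so $W_1 = 3$ and $W_2 = W_3 = 2$. Set $a_1^2 = 4/3$ and $a_2^2 = a_3^2 = 1$. This is realizable by choosing $\underline{\Delta}_1 > \underline{\Delta}_2 = \underline{\Delta}_3$, and only ratios matter because both sides are homogeneous in the $a_j$. Then
\begin{align*}
\|\bm{\theta}\|^2 = \frac{3}{\frac{3}{4} + 2} + 2 \cdot \frac{2}{1 + \frac{3}{4}} = \frac{12}{11} + \frac{16}{7} = \frac{260}{77} > \frac{10}{3} = \|\bm{\theta}_{\text{loc}}\|^2 ,
\end{align*}
so $\epsilon_0 > \epsilon_{\text{loc},0}$ and $\epsilon_1 > \epsilon_{\text{loc},1}$. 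The hub has $c_1 = \frac{1}{3} + \frac{1}{2} + \frac{1}{2} = \frac{4}{3}$, so its slightly worse local bound is over-counted by the two leaves that lean on it.

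The symmetry assumption in the statement is evidently meant to license exactly your exchange of sums, but it is not sufficient by itself. The statement needs an extra hypothesis, for example uniform row sums $\sum_{j \in \{i,\mathcal{N}_i\}} \omega_{i,j}^* = W$ for all $i$. This holds on the ring used in the simulation, where the neighbour weights are all $1/3$. With it, $[\omega_{i,j}^*/W]$ is symmetric and row-stochastic, hence doubly stochastic. Your Steps~1--2 then give $\|\bm{\theta}\|^2 \le \sum_j a_j^2 = \|\bm{\theta}_{\text{loc}}\|^2$ directly, and that version of your argument is complete.
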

	\begin{IEEEproof}
		See appendix.
	\end{IEEEproof}
	
	\cref{proposition_control} shows that cooperative learning leads to smaller error bounds than local learning. 
	In summary, agents improve their prediction accuracy and therefore enhance control performance by sharing learning information. 
	
	\section{Simulation}\label{section_simulation}
	
	To demonstrate the effectiveness of the proposed approach, simulations are conducted with more details in appendix.
	
	\subsection{Simulation Setting}
	
	\begin{figure}[t]
		\centering
		\begin{subfigure}{0.24\textwidth}
			\centering
			\begin{tikzpicture}
				\draw[->, thick,>=stealth] (0,0) -- (3.7,0) node[below] {\small $x$};
				\draw[->, thick,>=stealth] (0,0) -- (0,1.8) node[left]  {\small $y$};
				\node at (-0.2,-0.2) {\small $0$};
				
				\node at (2,1.2) {\includegraphics[width=1.2cm]{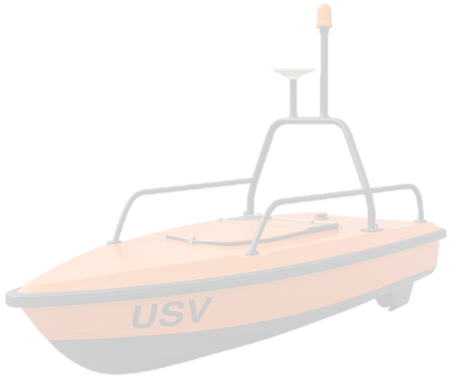}};
				\fill[red] (2,1) circle (0.7mm);
%				\node at (2,2.5) {\small Agent/USV $i$};
				
				\draw[->, red, thick,>=stealth] (0,0) -- (2,1);
				\draw[-, thick, dashed] (2,0) -- (2,1);
				\draw[-, thick, dashed] (0,1) -- (2,1);
				\node at (2,-0.2) {\small $q_{i,1}$};
				\node at (-0.3,1.5) {\small $q_{i,2}$};
				
				\draw[-, dashed, thick] (2,1) -- (3,1);
				\draw[-, dashed, thick] (2,1) -- (3,1.5);
				\draw[thick,->,>=stealth] (2.8,1) arc[start angle=0,end angle=28,radius=0.8];
				\node at (3.2,1.2) {\small $q_{i,3}$};
			\end{tikzpicture}
			\caption{}
		\end{subfigure}
		\hfill
		\begin{subfigure}{0.24\textwidth}
			\centering
			\begin{tikzpicture}
				\foreach \i in {1,2,3,4,5,6} {
					\node[circle,draw,minimum size=0.3cm, thick] (N\i) at (60*\i+0:1cm) {\scriptsize \i};
				}
				
				\foreach \i [evaluate={\j=int(mod(\i,6)+1);}] in {1,2,3,4,5,6} {
					\draw[<->,thick,>=stealth] (N\i) -- (N\j);
				}
				
			\end{tikzpicture}
			\caption{}
		\end{subfigure}
		\caption{
			Setting of the multi-USV system.
			(a) Generalized coordinate.
			(b) Communication topology $\mathcal{G}$.
		}
		\label{figure_setting}
	\end{figure}
	
	In this simulation,  we investigate a second-order consensus problem involving $N = 6$ unmanned surface vehicles (USVs), as illustrated in \cref{figure_setting}. 
	Each USV is governed by the Euler-Lagrange dynamics specified in \eqref{eqn_agent_EL_dynamics}, characterized by $n = 3$ degrees of freedom (DoFs). 
	The physical parameters of all USVs are adopted from the same configuration in \cite{yang2025safe}, and each USV is subject to unknown state dependent wind effect and current influence. 
	The connection of the USVs are formulated as a graph $\mathcal{G}$ shown in \cref{figure_setting}.
	The gains are set as $\bar{c}_1 = \bar{c}_3 = c_2 = 2$, $c_1 = 1$ and $c_3 = 2 \sqrt{c_2}$, $c_0 = 0.5 c_1 c_3$, $\bar{c}_0 = \bar{c}_1 \bar{c}_3$, such that \cref{remark_condition_c_bar,remark_condition_c} hold.
	To estimate the unknown disturbance $\bm{d}(\cdot)$, GP model is deployed on each agent $i \in \mathcal{V}$ with empty initial data set $\mathbb{D}_i(0) = \emptyset$ and squared exponential kernel $\kappa_p(\bm{x}, \bm{x}') = \sigma_{f,p}^2 \exp( -\sigma_l^{-2} (\bm{x} - \bm{x}')^T (\bm{x} - \bm{x}') / 2 )$ for any $\bm{x}, \bm{x}' \in \mathbb{X}$, where $\sigma_{f,p} = 1$ for each dimension $p = 1, \cdots, n$ and $\sigma_l = 0.1$.
	Each agent $i$ collects the received data pair at $t_{i,k}$ and deletes the oldest data pair when $| \mathbb{D}_i(t) | \ge \bar{N} = 100$.
	The upper bound of measurement noise for the data pair is considered as $\bar{w}_p = 0.01$ for $p = 1, \cdots, n$.
	The compensation $\hat{\bm{d}}_i(\cdot)$ on each agent $i \in \mathcal{V}$ is calculated by the proposed distributed GPs with \eqref{eqn_hat_d_i}, where the constant coefficients in \eqref{eqn_aggregation_weight} are set as $\omega_{i,j}^* = a_{i,j} / (1 + |\mathcal{N}_i|)$ for any $i,j \in \mathcal{V}$.
	The computation time $\Delta_i(t_{i,k})$ is randomly uniformly distributed in $[0.01, 0.1]$, $\forall k \in \mathbb{N}$ and $i \in \mathcal{V}$.
	The total simulation time is set as $30$.
	To demonstrate the effectiveness of the proposed approach, some existing learning methods are compared, namely MoE, PoE, gPoE, BCM, rBCM and Local.

	\subsection{Control and Prediction Performances}
	
	\begin{figure}[t]
		\centering
		\includegraphics[width=0.48\textwidth]{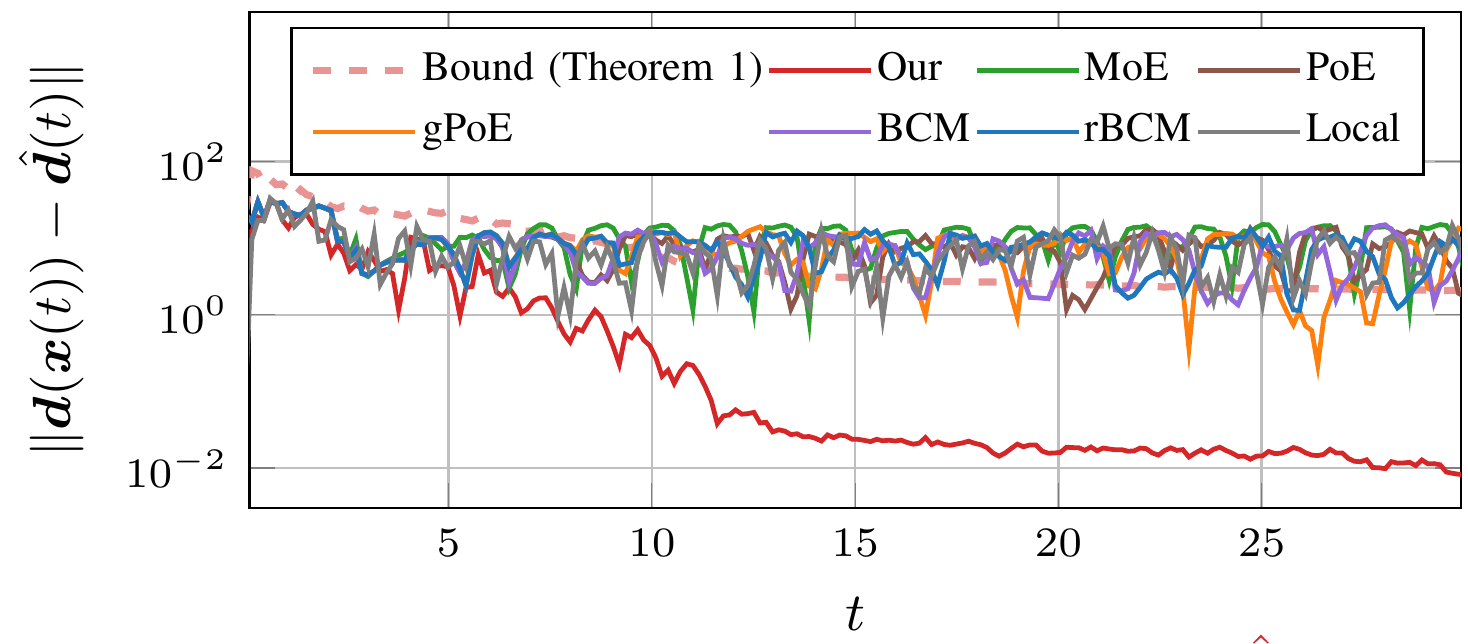}
		\caption{
			Aggregated prediction error $\| \bm{d}(\bm{x}(t)) - \hat{\bm{d}}(t) \|$ and the prediction error bound $\| \tilde{\bm{\eta}}(t) \|$ in \cref{theorem_delayed_agent_aggregated_prediction}.
		}
		\label{figure_prediction}
	\end{figure}
	
	To compare the control and prediction performances, the same random initial values $\bm{q}(0) \!=\! [\bm{q}_1^T(0), \!\cdots\!, \bm{q}_6^T(0)]^T$ are set for all methods.
	The comparative analysis of prediction accuracy across different strategies is presented in \cref{figure_prediction}, where the proposed aggregation method demonstrates superior performance by achieving significantly reduced aggregated prediction errors $\| \bm{d}(\bm{x}(t)) \!-\! \hat{\bm{d}}(t) \|$, with $\bm{d}(\bm{x}(t)) \!=\! [\bm{d}^T(\bm{x}_1(t)), \!\cdots\!, \bm{d}^T(\bm{x}_N(t))]^T$ and $\hat{\bm{d}}(t) \!=\! [\hat{\bm{d}}_1^T(t), \!\cdots\!, \hat{\bm{d}}_N^T(t)]^T$ representing the true and estimated disturbances, respectively. 
	This performance advantage can be attributed to the fundamental limitation of alternative PoE and BCM-based approaches, which neglect query point bias.
	Moreover, it is noteworthy that the cooperative learning performs better than local approach, only when the delay is explicitly considered in the aggregation as our method.
	In addition to the actual aggregated prediction error, the theoretical error bound $\tilde{\bm{\eta}}(t) = [\tilde{\eta}_1(t), \cdots, \tilde{\eta}_N(t)]^T$ derived in \cref{theorem_delayed_agent_aggregated_prediction} is also evaluated in simulation. 
	As shown in \cref{figure_prediction}, the actual prediction error $\| \bm{d}(\bm{x}(t)) - \hat{\bm{d}}(t) \|$ remains consistently below the derived bound $\| \hat{\bm{\eta}}(t) \|$ for all time, demonstrating the correctness of the theoretical analysis.
	
	\begin{figure}[t]
		\centering
		\includegraphics[width=0.48\textwidth]{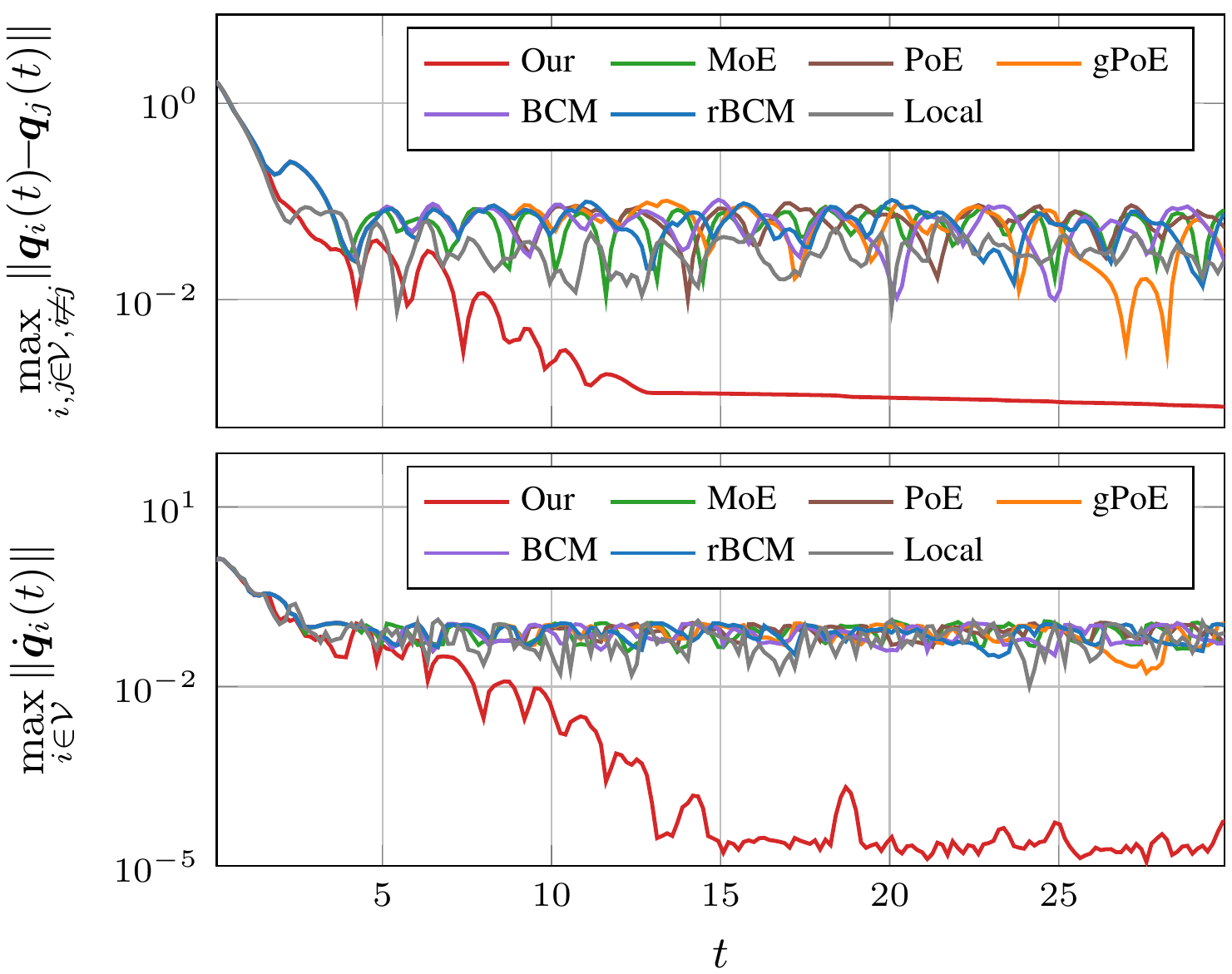}
		\caption{
			Maximal consensus error $\| \bm{q}_i(t) - \bm{q}_j(t) \|$ and velocity error $\| \dot{\bm{q}}_i(t) \|$ for all $i, j \in \mathcal{V}$ with $i \ne j$.
		}
		\label{figure_error}
	\end{figure}
	
	Given the strong interdependence between prediction accuracy and consensus performance, a comparative evaluation of control effectiveness across different cooperative learning-based strategies is conducted, with results presented in \cref{figure_error}. 
	The experimental results clearly demonstrate that the proposed methodology achieves superior performance in maximal consensus error and the maximal velocity compared to existing methods, which demonstrates the effectiveness of the proposed approach for addressing the second-order consensus problem.   

	\subsection{Monte Carlo Test}	
	
	To eliminate potential biases arising from specific initial conditions, a Monte Carlo test is conducted with 100 independent trials for each comparative method. 
	In each trial, the initial position coordinates $\bm{q}_i(0)$ and velocities $\dot{\bm{q}}_i(0)$ for each agent $i \in \mathcal{V}$ are randomly sampled from a uniform distribution over the domain $[-1,1]^3$, thereby providing a statistically representative assessment of algorithm performance across diverse initialization scenarios.	
	
	The aggregated prediction performance across all six cooperative learning-based control strategies is presented in \cref{figure_prediction_MonteCarlo}, where the proposed methodology consistently demonstrates superior prediction accuracy as reflected by smaller prediction error.
	Consequently, the enhanced disturbance estimation capability translates directly into improved second-order consensus performance as shown in \cref{figure_error_MonteCarlo}, where the maximal difference between $\bm{q}_i$ and $\bm{q}_j$ and the maximal velocity $\dot{\bm{q}}_i$ are smaller compared to existing cooperative learning frameworks, thereby demonstrating the statistical significance and robustness of the proposed approach.
	
	\begin{figure}[t] 
		\centering
		\includegraphics[width=0.48\textwidth]{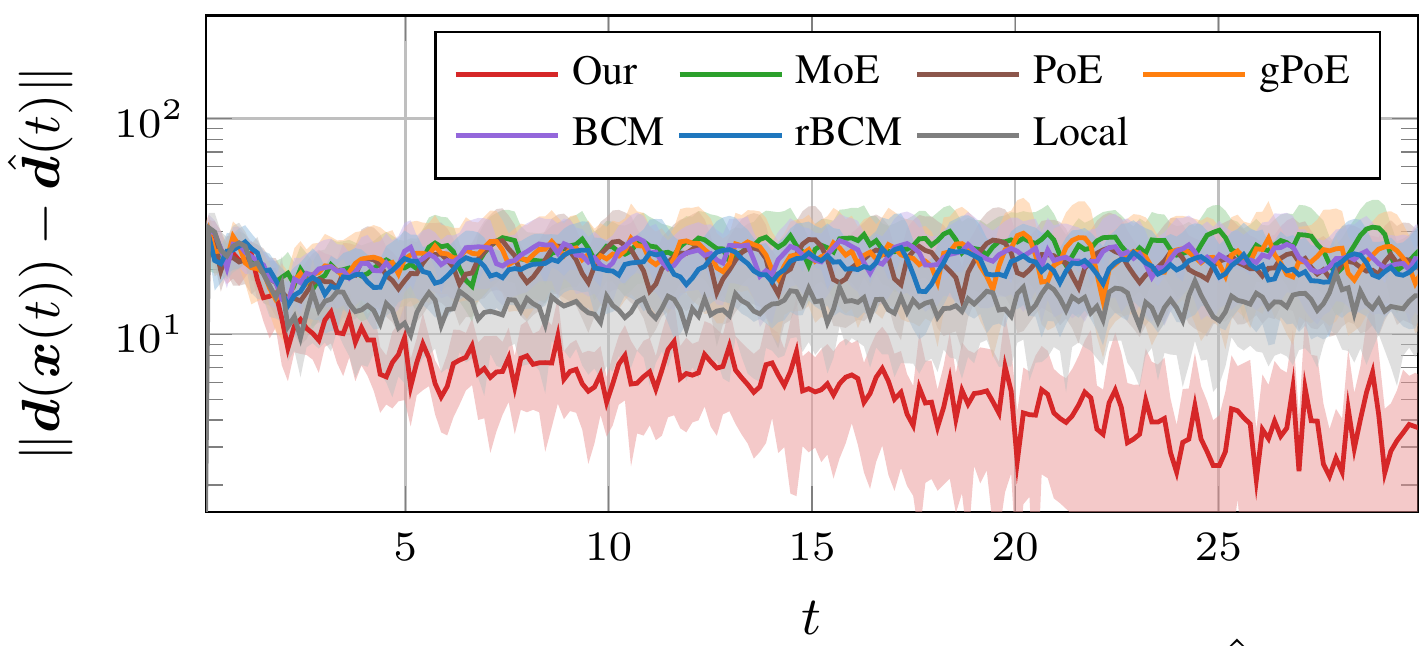}
		\caption{
			Aggregated prediction error $\| \bm{d}(\bm{x}(t)) - \hat{\bm{d}}(t) \|$ w.r.t time $t$ from $7$ cooperative learning-based control methods.
			The solid line represents the mean value among $100$ Monte Carlo tests, and the shadowed area denotes the variances.
		}
		\label{figure_prediction_MonteCarlo}
	\end{figure}
	
	\begin{figure}[t] 
		\centering
		\includegraphics[width=0.48\textwidth]{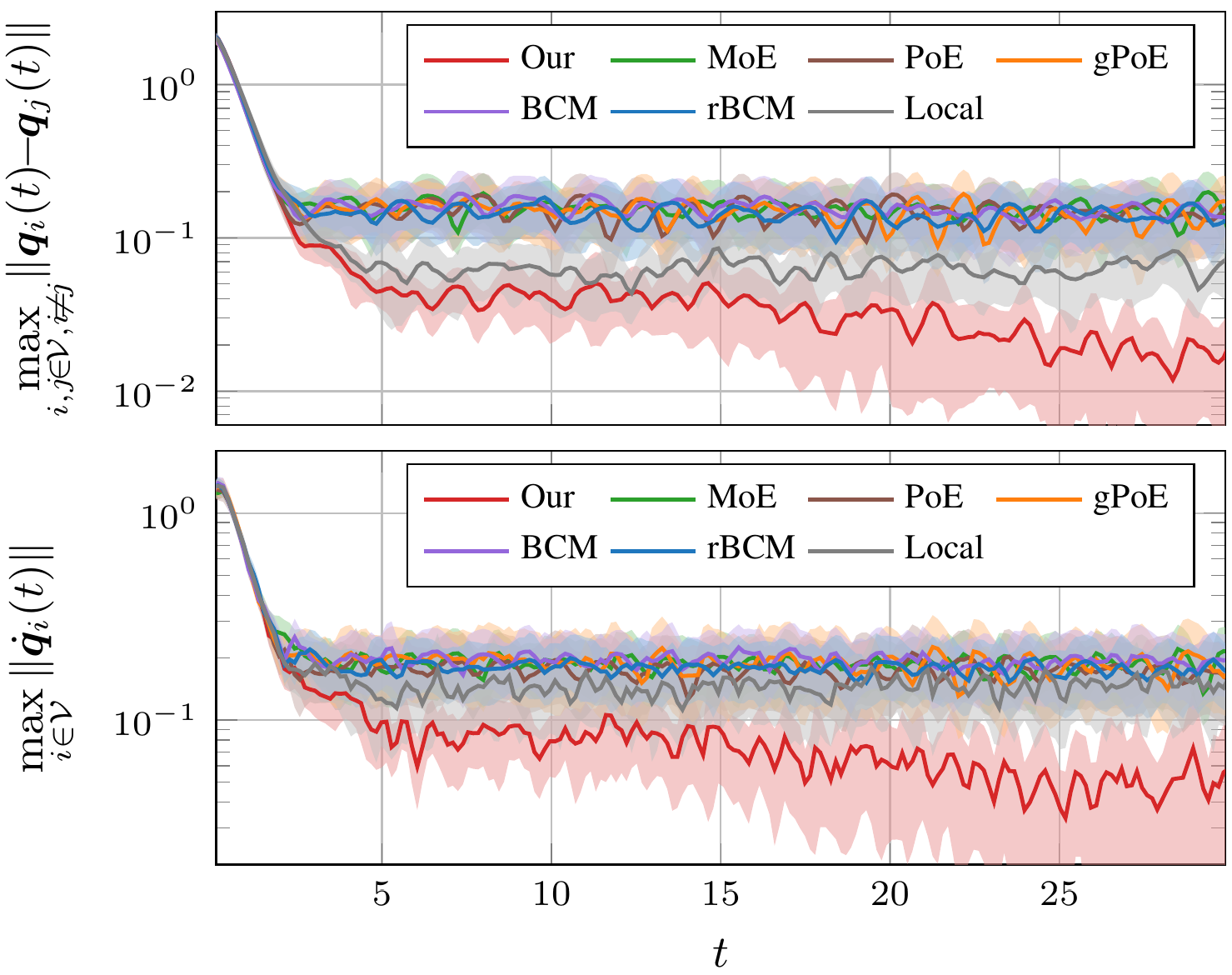}
		\caption{
			Maximal consensus error $\| \bm{q}_i(t) - \bm{q}_j(t) \|$ between any two agents $i, j \in \mathcal{V}$ with $i \ne j$ and maximal velocity error $\| \dot{\bm{q}}_i(t) \|$ for any agent $i \in \mathcal{V}$ over time $t$ using different cooperative learning-based control strategies.
			The solid line represents the mean value among $100$ Monte Carlo tests, and the shadowed area denotes the variances.
		}
		\label{figure_error_MonteCarlo}
	\end{figure}
	
	Furthermore, the evaluation of steady-state prediction and control performance is essential for comprehensively demonstrating the efficacy of the proposed methodology. To quantify steady-state behavior, three key performance indicators are employed: the maximal prediction error $\max_{t \in \mathbb{T}} \| \bm{d}(\bm{x}(t)) - \hat{\bm{d}}(t) \|$, consensus error $\max_{t \in \mathbb{T}} \max_{i, j \in \mathcal{V}, i \ne j} \| \bm{q}_i(t) - \bm{q}_j(t) \|$ and velocity error $\max_{t \in \mathbb{T}} \max_{i \in \mathcal{V}} \| \dot{\bm{q}}_i(t) \|$ in the time domain $\mathbb{T} = [10, 30]$.
	The results shown in \cref{figure_prediction_SteadyState,figure_error_SteadyState} demonstrate that the proposed approach achieves substantially reduced errors in both disturbance prediction and consensus control compared to other strategies. 
	These findings provide empirical evidence supporting the superior performance and practical effectiveness of the proposed cooperative learning-based control framework for multi-agent consensus applications.
	
	\begin{figure}[t] 
		\centering
		\includegraphics[width=0.48\textwidth]{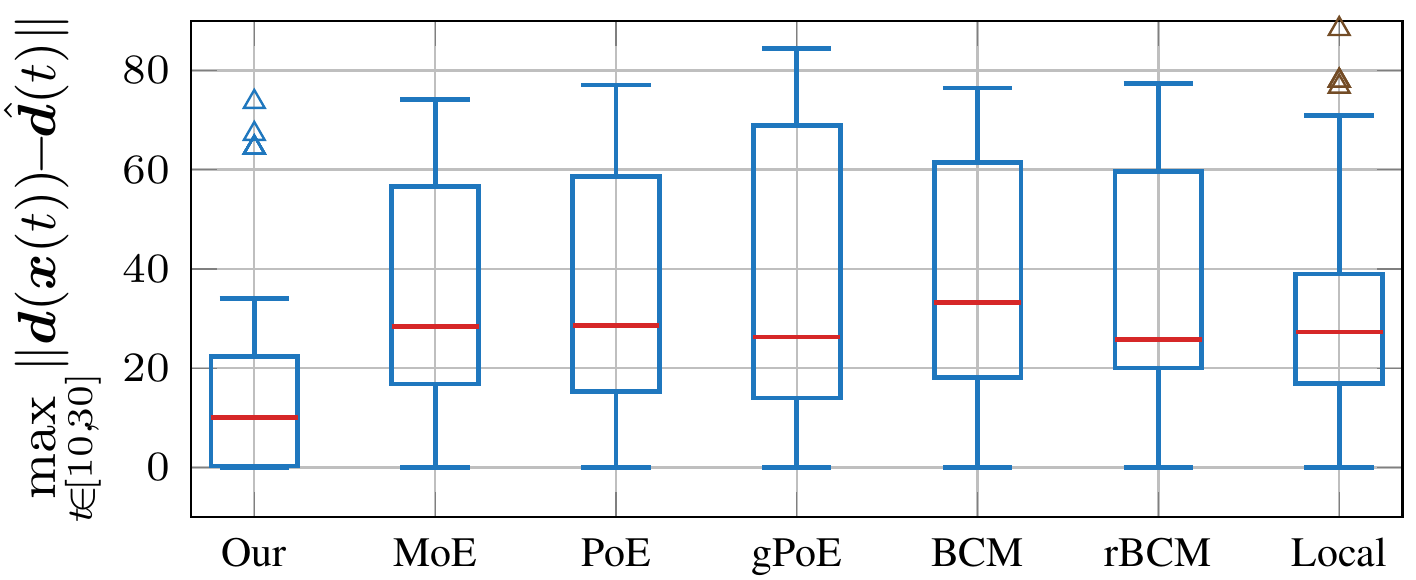}
		\caption{
			Maximal aggregated prediction error $\| \bm{d}(\bm{x}(t)) - \hat{\bm{d}}(t) \|$ in time domain $t \in [10, 30]$ for different cooperative learning-based control strategy, which is summarized from $100$ times Monte Carlo tests.
		}
		\label{figure_prediction_SteadyState}
	\end{figure}
	
	\begin{figure}[t] 
		\centering
		\includegraphics[width=0.48\textwidth]{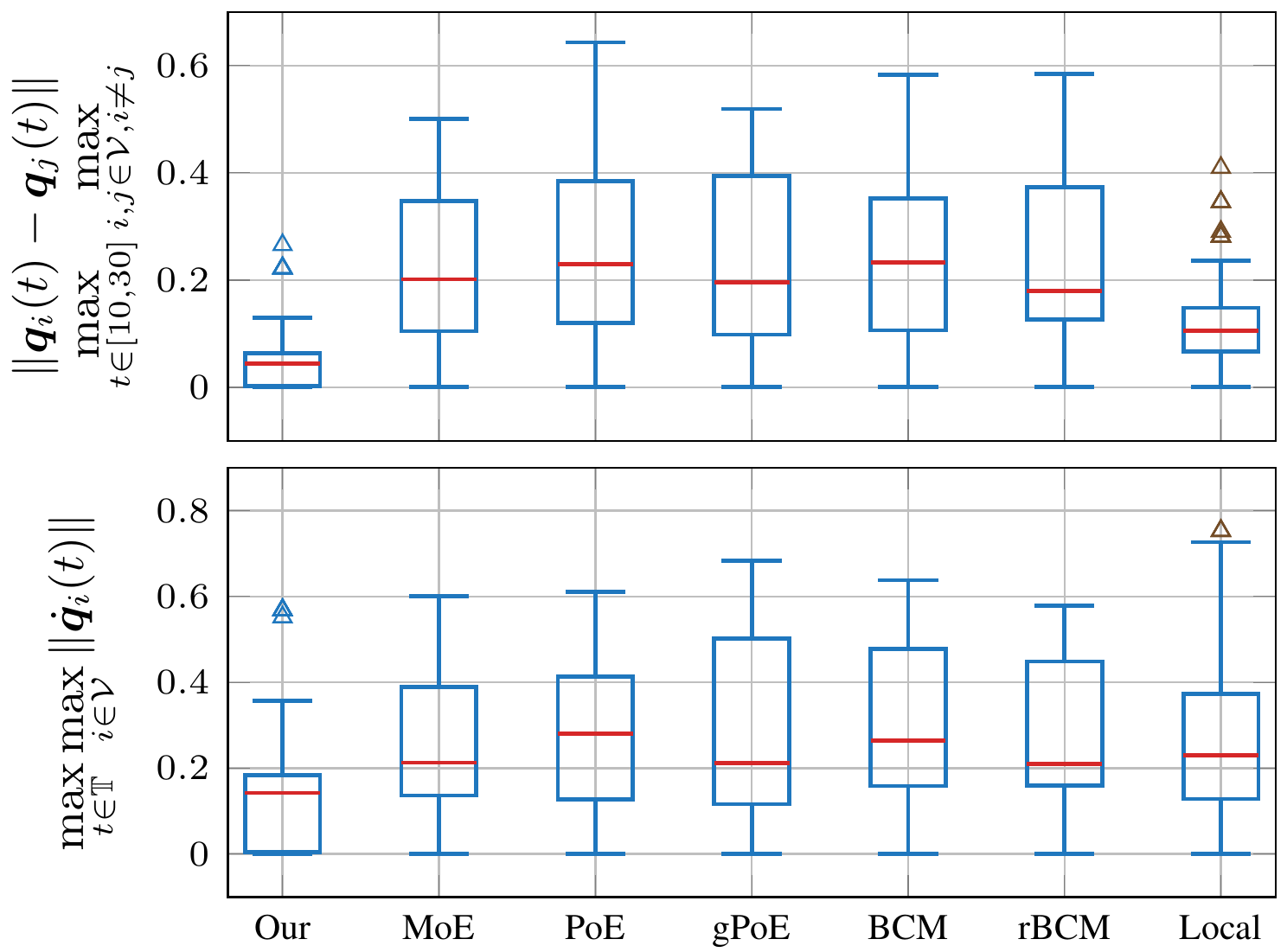}
		\caption{
			Maximal consensus error $\| \bm{q}_i(t) - \bm{q}_j(t) \|$ and velocity error $\| \dot{\bm{q}}_i(t) \|$ for all $i, j \in \mathcal{V}$ with $i \ne j$ in time domain $t \in [10, 30]$ for different cooperative learning-based control strategy, which is from $100$ times Monte Carlo tests.
		}
		\vspace{-0.3cm}
		\label{figure_error_SteadyState}
	\end{figure}
	
	\section{Conclusion}
	\label{section_conclusion}
	This paper presents a novel cooperative online learning-based control framework for multi-agent robotic systems with unknown dynamics, addressing the challenge of achieving second-order consensus under heterogeneous computational delays. 
	GP regression is employed at each individual agent, and an asynchronous cooperative learning strategy is developed to aggregate predictions despite differing computational delays and query point.
	Furthermore, a distributed control law is designed based on an adjoint MAS, which provides theoretical approximate consensus guarantees. 
	The effectiveness and superiority of the proposed methodology are demonstrated through numerical simulations on unmanned surface vehicles compared to existing cooperative approaches.
	
	\bibliographystyle{IEEEtran}
	\bibliography{ref}
	
	\def\BiographyDistance{-11pt}
	\begin{IEEEbiography}[{\includegraphics[width=1in,height=1.25in,clip,keepaspectratio]{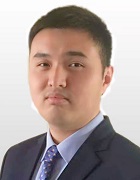}}]{Xiaobing Dai}
		received the B.Sc. mechanical engineering from the Tongji University, Shanghai, China, in 2018 with direction in mechatronics, building environment and civil engineering. He received double M.Sc degrees in Mechanical Engineering, Mechatronics and Robotics from the Technical University of Munich, Munich, Germany, in 2021. Since February 2022, he is a PhD student at the Chair of Information-oriented Control, TUM School of Computation, Information and Technology at the Technical University of Munich, Munich, Germany. His current research interests include efficient online machine learning, networked control systems, safe learning-based control.
	\end{IEEEbiography}
	\vspace{\BiographyDistance}
	
	\begin{IEEEbiography}[{\includegraphics[width=1in,height=1.25in,clip,keepaspectratio]{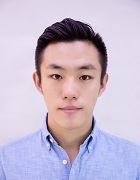}}]{Zewen Yang}
		(Member, IEEE)
		received the M.S. degree in control engineering from Northeast Forest University, in 2017. He pursued a Ph.D. in control science and engineering at the College of Intelligent Systems Science and Engineering, Harbin Engineering University, Harbin, China, from 2017 to 2019 and joined the Chair of Information-oriented Control, School of Computation, Information and Technology, the Technical University of Munich (TUM), Munich, Germany, in machine learning and data-driven control until 2023. 
		From 2023 to 2024, he was a postdoctoral researcher at the Robert Koch Institute, Berlin, Germany. 
		Since 2025, he has been with the Chair of Robotics and Systems Intelligence at the Munich Institute of Robotics and Machine Intelligence, TUM.
		His current research interests include multi-agent systems, cooperative learning, generative models, control theory, and general robotics.
	\end{IEEEbiography}
	\vspace{\BiographyDistance}
	
	\begin{IEEEbiography}[{\includegraphics[width=1in,height=1.25in,clip,keepaspectratio]{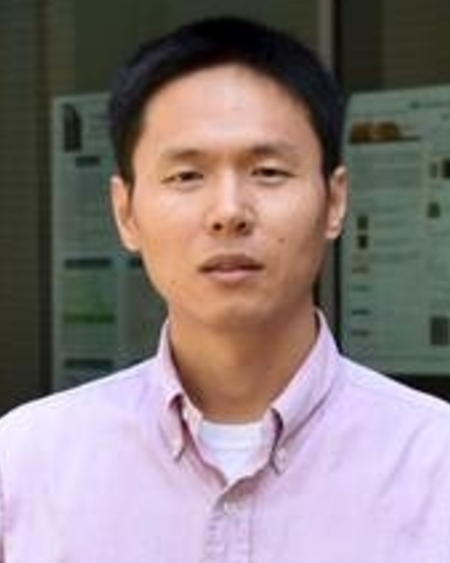}}]{Wei Ren}
		(Fellow, IEEE)
		received the Ph.D. degree in electrical engineering from Brigham Young University, Provo, UT, USA, in 2004.
		
		He is currently a Professor with the Department of Electrical and Computer Engineering, University of California at Riverside, Riverside, CA, USA. His research focuses on distributed control of multiagent systems.
		
		Dr. Ren was the recipient of the IEEE Control Systems Society Antonio Ruberti Young Researcher Prize in 2017 and the National Science Foundation CAREER Award in 2008. He was an IEEE Control Systems Society Distinguished Lecturer from 2020 to 2024.
	\end{IEEEbiography}
	\vspace{\BiographyDistance}
	
	\vfill
	
	\begin{IEEEbiography}[{\includegraphics[width=1in,height=1.25in,clip,keepaspectratio]{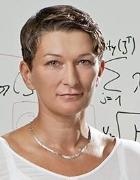}}]{Sandra Hirche}
		(Fellow, IEEE) 
		received the Dipl.-Ing degree in aeronautical engineering from the Technical University of Berlin, Berlin, Germany, in 2002, and the Dr. Ing. degree in electrical engineering from the Technical University of Munich, Munich, Germany, in 2005. From 2005 to 2007, she was awarded a Post-doctoral scholarship from the Japanese Society for the Promotion of Science at the Fujita Laboratory, Tokyo Institute of Technology, Tokyo, Japan. From 2008 to 2012, she was an Associate Professor with the Technical University of Munich. Since 2013, she has served as Technical University of Munich Liesel Beckmann Distinguished Professor and has been with the Chair of Information-Oriented Control, Department of Electrical and Computer Engineering, Technical University of Munich. She has authored or coauthored more than 150 papers in international journals, books, and refereed conferences. Her main research interests include cooperative, distributed, and networked control with applications in human--machine interaction, multirobot systems, and general robotics. 
		
		Dr. Hirche has served on the editorial boards of the IEEE Transactions on Control of Network Systems, the IEEE Transactions on Control Systems Technology, and the IEEE Transactions on Haptics. She has received multiple awards such as the Rohde \& Schwarz Award for her Ph.D. thesis, the IFAC World Congress Best Poster Award in 2005, and -- together with students -- the 2018 Outstanding Student Paper Award of the IEEE Conference on Decision and Control as well as Best Paper Awards from IEEE Worldhaptics and the IFAC Conference of Manoeuvring and Control of Marine Craft in 2009.
	\end{IEEEbiography}
	
	\vfill
\end{document}